\newtheorem{definition}{Definition}
\newtheorem{theorem}{Theorem}
\newtheorem{proposition}{Proposition}
\newtheorem{lemma}{Lemma}
\global\long\def\argmin{\operatorname*{arg\, min}}
\global\long\def\rank{\operatorname*{rank}}
\global\long\def\argmin{\operatorname*{arg\, min}}
\global\long\def\sign{\operatorname{sign}}
\global\long\def\prox{\operatorname{prox}}
\global\long\def\diag{\operatorname{diag}}
\newcommand{\RR}{\mathbb{R}}
\newcommand{\LL}{\mathcal{L}}
\newcommand{\trans}{^{\scriptscriptstyle \top}}
\newcommand{\norm}[1]{\|#1\|}
\newcommand{\bignorm}[1]{\Big\|#1\Big\|}
\newcommand{\bX}{\mathbf X}
\newcommand{\bN}{\mathbf N}
\newcommand{\inr}[1]{\langle #1 \rangle}
\newcommand{\R}{\mathbb R}
\newcommand{\lmax}{\lambda_{\max}}
\DeclareMathOperator{\tr}{tr}
\newcommand{\E}{\mathbb E}
\newcommand{\EE}{\mathcal E}
\newcommand{\cA}{\mathcal A}
\newcommand{\cF}{\mathcal F}
\newcommand{\cC}{\mathcal C}
\newcommand{\cL}{\mathcal L}
\newcommand{\cP}{\mathcal P}
\newcommand{\cW}{\mathcal W}
\newcommand{\mleq}{\preceq}
\renewcommand{\P}{\mathbb P}
\newcommand{\eps}{\varepsilon}
\newcommand{\op}{\mathrm{op}}
\newcommand{\normop}[1]{\norm{#1}_{\op}}
\newtheorem{assumption}{Assumption}
\title{Link Prediction in Graphs with Autoregressive
  Features}
\author{
Emile Richard \\
CMLA UMR CNRS 8536, \\ ENS Cachan, France\\%& 1000mercis\\
%\texttt{emile.richard@cmla.ens-cachan.fr} \\
\And
St{\'e}phane Ga{\"i}ffas \\
CMAP - Ecole Polytechnique \\ \& LSTA - Universit{\'e} Paris 6\\
%\texttt{stephane.gaiffas@upmc.fr} \\
\And
Nicolas Vayatis \\
CMLA UMR CNRS 8536, \\ ENS Cachan, France\\
%\texttt{nicolas.vayatis@cmla.ens-cachan.fr} \\
}
\begin{document}

\maketitle

\begin{abstract}
In the paper, we consider the problem of link prediction in time-evolving graphs. We assume that certain graph features, such as the node degree, follow a vector autoregressive (VAR) model and we propose to use this information to improve the accuracy of prediction. Our strategy involves a joint optimization procedure over the space of adjacency matrices and VAR matrices which takes into account both sparsity and low rank properties of the matrices. Oracle inequalities are derived and illustrate the trade-offs in the choice of smoothing parameters when modeling the joint effect of sparsity and low rank property. The estimate is computed efficiently using proximal methods through a generalized forward-backward agorithm.
\end{abstract}

\section{Introduction}
% blabla general
Forecasting systems behavior with multiple responses has been a
challenging issue in many contexts of applications such as
collaborative filtering, financial markets, or bioinformatics, where
responses can be, respectively, movie ratings, stock prices, or
activity of genes within a cell. Statistical modeling techniques have
been widely investigated in the context of multivariate time series
either in the multiple linear regression setup \cite{Breiman97} or
with autoregressive models \cite{Tsay05}. More recently, kernel-based
regularized methods have been developed for multitask learning
\cite{Evgeniou05, Argyriou07}. These approaches share  the
use of the correlation structure among input variables to enrich the
prediction on every single output. Often, the correlation structure is
assumed to be given or it is estimated separately. A discrete encoding
of correlations between variables can be modeled as a graph so that
learning the dependence structure amounts to performing graph
inference through the discovery of uncovered edges on the graph. The
latter problem is interesting {\em per se} and it is known as the
problem of link prediction where it is assumed that only a part of the
graph is actually observed \cite{liben2007link, Kolar11}. This situation
occurs in various applications such as recommender systems, social
networks, or proteomics, and the appropriate tools can be found among
matrix completion techniques \cite{Srebro05, Candes09,
Abernethy09}. In the realistic setup of a time-evolving graph,
matrix completion was also used and adapted to take into account the
dynamics of the features of the graph \cite{Richard10}.
% appli web
In this paper, we study the prediction problem where the observation is a sequence of
graphs adjacency matrices $(A_t)_{0 \leq t \leq T}$ and the goal is to
predict $A_{T+1}$. This type of problem arises in applications such as
recommender systems where, given information on purchases made by some
users, one would like to predict future purchases. In this context,
users and products can be modeled as the nodes of a bipartite graph, while
purchases or clicks are modeled as edges.
% appli bio
In functional genomics and systems biology, estimating regulatory
networks in gene expression can be performed by modeling the data as
graphs and fitting predictive models is a natural way for estimating
evolving networks in these contexts.
% etat de l'art
A large variety of methods for link prediction only consider
predicting from a single static snapshot of the graph - this includes
heuristics \cite{liben2007link, sarkar2010theoretical}, matrix
factorization \cite{koren2008factorization}, diffusion \cite{Myers10}, or probabilistic methods
\cite{taskar2003link}. More recently, some works have investigated
using sequences of observations of the graph to improve the
prediction, such as using regression on features extracted from the
graphs \cite{Richard10}, using matrix factorization
\cite{koren2010collaborative}, continuous-time regression \cite{Vu11}.
% notre hypothese
Our main assumption is that the network effect is a cause and a symptom at the same time, and therefore, the edges and the graph features should be estimated simultaneously. We propose a regularized approach to predict the uncovered links and the evolution of the graph features simultaneously. We provide oracle bounds under the assumption that the noise sequence has subgaussian tails and we prove that our procedure achieves a trade-off in the calibration of smoothing parameters which adjust with the sparsity and the rank of the unknown adjacency matrix.
The rest of this paper is organized as follows. In Section 2, we describe the general setup of our work with the main assumptions and we formulate a regularized optimization problem which aims at jointly estimating the autoregression parameters and predicting the graph. In Section 3, we provide technical results with oracle inequalities and other theoretical guarantees on the joint estimation-prediction. Section 4 is devoted to the description of the numerical simulations which illustrate our approach. We also provide an efficient algorithm for solving the optimization problem and show empirical results. The proof of the theoretical results are provided as supplementary material in a separate document.

\section{Estimation of low-rank graphs with autoregressive features}

Our approach is based on the asumption that features can explain most
of the information contained in the graph, and that these features are
evolving with time. We make the following assumptions about the
sequence $(A_t)_{t \geq 0}$ of adjacency matrices of the graphs
sequence.

\paragraph*{Low-Rank.} We assume that the matrices $A_t$ have
low-rank. This reflects the presence of highly connected groups of
nodes such as communities in social networks, or product categories and groups of loyal/fanatic users in a market place data, and is sometimes motivated by the small number of factors that explain nodes interactions.

\paragraph*{Autoregressive linear features.}

We assume to be given a linear map $\omega : \RR^{n \times
  n}\rightarrow \RR^d$ defined by
\begin{equation}
  \label{eq:defOm} 
  \omega(A) = \Big( \inr{\Omega_1 , A}, \cdots ,\inr{\Omega_d , A}  \Big),
\end{equation}
where $(\Omega_i)_{1\leq i \leq d}$ is a set of $n \times n$
matrices. These matrices can be either deterministic or random in our
theoretical analysis, but we take them deterministic for the sake of
simplicity. The vector time series $(\omega(A_t))_{t \geq 0}$ has
autoregressive dynamics, given by a VAR (Vector Auto-Regressive)
model:
\begin{equation*}
  \omega(A_{t+1}) = W_0^\top \omega(A_t) + N_{t+1},
\end{equation*}
where $W_0 \in \RR^{d \times d}$ is a unknown sparse matrix and
$(N_t)_{t \geq 0}$ is a sequence of noise vectors in $\R^d$. An
example of linear features is the degree ({\it i.e.} number of edges
connected to each node, or the sum of their weights if the edges are
weighted), which is a measure of popularity in social and commerce
networks. Introducing
\begin{equation*}
  \bX_{T-1} = (\omega(A_0), \ldots, \omega(A_{T-1}))^\top \;\; \text{ and
  } \;\; \bX_{T} = (\omega(A_1), \ldots, \omega(A_{T}))^\top,
\end{equation*}
which are both $T \times d$ matrices, we can write this model in a
matrix form:
\begin{equation}
  \label{eq:matrix-autoregressive}
  \bX_T = \bX_{T-1} W_0 + \bN_T,
\end{equation}
where $\bN_T = (N_1, \ldots, N_T)^\top$.

This assumes that the noise is driven by time-series dynamics (a
martingale increment), where each coordinates are independent (meaning
that features are independently corrupted by noise), with a
sub-gaussian tail and variance uniformly bounded by a constant
$\sigma^2$. In particular, no independence assumption between the
$N_t$ is required here.

\paragraph*{Notations.}

The notations $\norm{\cdot}_F$, $\norm{\cdot}_p$,
$\norm{\cdot}_\infty$, $\norm{\cdot}_*$ and $\normop{\cdot}$ stand,
respectively, for the Frobenius norm, entry-wise $\ell_p$ norm,
entry-wise $\ell_\infty$ norm, trace-norm (or nuclear norm, given by
the sum of the singular values) and operator norm (the largest
singular value). We denote by $\inr{A, B} = \tr(A^\top B)$ the
Euclidean matrix product. A vector in $\R^d$ is always understood as a
$d \times 1$ matrix. We denote by $\norm{A}_0$ the number of non-zero
elements of $A$. The product $A \circ B$ between two matrices with
matching dimensions stands for the Hadamard or entry-wise product
between $A$ and $B$. The matrix $|A|$ contains the absolute values of entries of $A$. The matrix $(M)_+$ is the componentwise positive part of the matrix M, and $\textrm{sign}(M)$ is the sign matrix associated to $M$ with the convention $\textrm{sign}(0) = 0$

If $A$ is a $n \times n$ matrix with rank $r$, we write its SVD as $A
= U \Sigma V^\top = \sum_{j=1}^r \sigma_j u_j v_j^\top$ where $\Sigma
= \diag(\sigma_1, \ldots, \sigma_r)$ is a $r \times r$ diagonal matrix
containing the non-zero singular values of $A$ in decreasing order,
and $U = [u_1, \ldots, u_r]$, $V = [v_1, \ldots, v_r]$ are $n \times
r$ matrices with columns given by the left and right singular vectors
of $A$.  The projection matrix onto the space spanned by the
columns (resp. rows) of $A$ is given by $P_U = U U^\top$ (resp. $P_V =
V V^\top$). The operator $\cP_A : \R^{n \times n} \rightarrow \R^{n
  \times n}$ given by $\cP_A(B) = P_U B + B P_V - P_U B P_V$ is the
projector onto the linear space spanned by the matrices $u_k x^\top$
and $y v_k^\top$ for $1 \leq j, k \leq r$ and $x, y \in \R^{n}$. The
projector onto the orthogonal space is given by $\cP_A^\perp(B) = (I -
P_U) B (I - P_V)$. We also use the notation $a \vee b = \max(a, b)$.

\subsection{Joint prediction-estimation through penalized optimization}

In order to reflect the autoregressive dynamics of the features, we
use a least-squares goodness-of-fit criterion that encourages the
similarity between two feature vectors at successive time steps.  In
order to induce sparsity in the estimator of $W_0$, we penalize this
criterion using the $\ell_1$ norm. This leads to the following
penalized objective function:
\begin{equation*}
  J_1(W) = \frac{1}{d T} \norm{\bX_{T} - \bX_{T-1}W}_F^2 + \kappa \norm{W}_1,
\end{equation*}
where $\kappa > 0$ is a smoothing parameter. 

Now, for the prediction of $A_{T+1}$, we propose to minimize a
least-squares criterion penalized by the combination of an $\ell_1$
norm and a trace-norm. This mixture of norms induces sparsity and a
low-rank of the adjacency matrix. Such a combination of $\ell_1$ and
trace-norm was already studied in~\cite{6034724} for the matrix
regression model, and in~\cite{Richard12} for the prediction of an
adjacency matrix.

The objective function defined below exploits the fact that if $W$ is
close to $W_0$, then the features of the next graph $\omega(A_{T+1})$
should be close to $W^\top \omega(A_T)$. Therefore, we consider
\begin{equation*}
  J_2(A,W) = \frac{1}{d} \norm{\omega(A) - W^\top \omega(A_T)  }_F^2 +
  \tau \norm{A}_* + \gamma \norm{A}_1,
\end{equation*}
where $\tau, \gamma > 0$ are smoothing parameters. The overall 
objective function is the sum of the two partial objectives $J_1$ and
$J_2$, which is jointly convex with respect to $A$ and $W$:
\begin{equation}
  \label{eq:joint_objective}
  \LL (A,W) \doteq \frac{1}{d T} \norm{\bX_T - \bX_{T-1} W}_F^2 + \kappa
  \norm{W}_1 + \frac{1}{d} \norm{\omega(A) - W^\top \omega(A_{T})}_2^2 +
  \tau \norm{A}_* + \gamma \norm{A}_1,
\end{equation}
 If we choose convex cones $\cA \subset \R^{n \times n}$
and $\cW \subset \R^{d \times d}$, our joint estimation-prediction
procedure is defined by
\begin{equation}
  \label{eq:Shat_W_hat_def}
  (\hat A, \hat W) \in \argmin_{(A, W) \in \cA \times \cW} \cL(A, W).
\end{equation}
It is natural to take $\cW = \R^{d \times d}$ and $\cA = (\R_+)^{n
  \times n}$ since there is no {\it a priori} on the values of the feature
matrix $W_0$, while the entries of the matrix $A_{T+1}$ must be
positive.

In the next section we propose oracle inequalities which prove that
this procedure can estimate $W_0$ and predict $A_{T+1}$ at the same
time.

\subsection{Main result}
The central contribution of our work is to bound the prediction error with high probability under the following natural hypothesis on the noise process.
\begin{assumption}
  \label{ass:noise}
  We assume that $(N_t)_{t \geq 0}$ satisfies $\E [N_t | \cF_{t-1}] =
  0$ for any $t \geq 1$ and that there is $\sigma > 0$ such that for
  any $\lambda \in \R$ and $j=1, \ldots, d$ and $t \geq 0$\textup:
  \begin{equation*}
    \E [ e^{\lambda (N_t)_j} | \cF_{t-1} ] \leq e^{\sigma^2 \lambda^2
      / 2}.
  \end{equation*}
  Moreover, we assume that for each $t \geq 0$, the coordinates
  $(N_t)_1, \ldots, (N_t)_d$ are independent.
\end{assumption}
The main result can be summarized  as follows. The prediction error and the estimation error  can be simultaneously bounded by the sum of three terms that involve homogeneously (a) the sparsity, (b) the rank of the adjacency matrix $A_{T+1}$, and (c) the sparsity of the VAR model matrix $W_0$.  The tight bounds we obtain are similar to the bounds of the Lasso and are upper bounded by: 

$$ C_1 \sqrt{\frac{\log d}{Td^2}}\|W_0\|_0 + C_2\sqrt{\frac{\log n}{ d}}\|A_{T+1}\|_0 +C_3\sqrt{\frac{\log n}{ d}}\rank A_{T+1} ~~.$$
The positive constants $C_1,C_2,C_3$ are proportional to the noise level $\sigma$. The interplay between the rank and sparsity constraints on $A_{T+1}$ are reflected in the observation that the values of $C_2$ and $C_3$ can be changed as long as their sum remains constant. 

\section{Oracle inequalities}
\label{sec:oracle}

In this section we give oracle inequalities for the mixed
prediction-estimation error which is given, for any $A \in \R^{n
  \times n}$ and $W \in \R^{d \times d}$, by
\begin{equation}
  \label{eq:defE}
  \EE( {A},  {W})^2 \doteq \frac{1}{d} \| ( {W} - W_0)^\top
  \omega(A_{T}) - \omega( {A}-A_{T+1}) \|_2^2 +  \frac{1}{dT} \|
  \bX_{T-1}( {W } - W_0)\|_F^2.
\end{equation}
It is important to have in mind that an upper-bound on $\EE$ implies
upper-bounds on each of its two components. It entails in particular
an upper-bound on the feature estimation error $ \|
\bX_{T-1}(\widehat{W}-W_0)\|_F$ that makes $\|(\widehat{W}-W_0)^\top
\omega(A_{T}) \|_2$ smaller and consequently controls the prediction
error over the graph edges through $\| \omega(\widehat{A}-A_{T+1})
\|_2$.

The upper bounds on $\EE$ given below exhibit the dependence of the
accuracy of estimation and prediction on the number of features $d$,
the number of edges $n$ and the number $T$ of observed graphs in the
sequence. 

Let us recall $\bN_T = (N_1, \ldots, N_T)^\top$ and introduce the
noise processes
\begin{equation*}
  M = - \sum_{j=1}^d (N_{T+1})_j \Omega_j \;\; \text{ and } \;\;
  \Xi = \sum_{t=1}^{T+1} \omega(A_{t-1}) N_t^\top,
\end{equation*}
which are, respectively, $n \times n$ and $d \times d$ random
matrices. The source of randomness comes from the noise sequence
$(N_t)_{t \geq 0}$, see Assumption~\ref{ass:noise}. If these noise
processes are controlled correctly, we can prove the following oracle
inequalities for procedure~\eqref{eq:Shat_W_hat_def}. The next result
is an oracle inequality of slow type (see for
instance~\cite{Bickel09}), that holds in full generality.

\begin{theorem}
  \label{thm:slowrate}
  Let $(\hat A, \hat W)$ be given by~\eqref{eq:Shat_W_hat_def} and
  suppose that
  \begin{equation}
    \label{eq:slow-rate-smoothing-parameters}
    \tau \geq \frac{2\alpha}{d} \| M \|_{\op}, \quad \gamma \geq
    \frac{2 (1-\alpha
      )}{d} \| M \|_\infty \;\; \text{ and } \;\; \kappa \geq
    \frac{2}{d T}\|\Xi\|_\infty
  \end{equation}
  for some $\alpha \in (0, 1)$. Then, we have
  \begin{equation*}
    \label{eq:slowrate}
    \EE(\widehat{A}, \widehat{W})^2 \leq \inf_{(A, W) \in \cA \times \cW}
    \Big\{ \EE(A, W)^2 + 2 \tau \| A \|_* +
    2 \gamma \| A \|_1 +2 \kappa\| W \|_1 \Big\}.
  \end{equation*}
\end{theorem}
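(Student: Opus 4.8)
The plan is to run the standard convexity argument for slow-rate oracle inequalities, which requires nothing beyond the optimality of $(\hat A, \hat W)$ and the duality between the penalties and the noise norms; this is precisely why the bound holds in full generality, with no restricted-eigenvalue-type hypothesis. First I would fix an arbitrary $(A,W) \in \cA \times \cW$ and start from the defining inequality $\LL(\hat A, \hat W) \leq \LL(A, W)$. The crucial preliminary step is to rewrite the two empirical quadratic terms of $\LL$ using the model equations. Substituting $\bX_T = \bX_{T-1} W_0 + \bN_T$ into $\frac{1}{dT}\norm{\bX_T - \bX_{T-1}W}_F^2$ and $\om(A_{T+1}) = W_0^\top \om(A_T) + N_{T+1}$ into $\frac{1}{d}\norm{\om(A) - W^\top \om(A_T)}_2^2$, each square expands as a quadratic part equal to the corresponding piece of $\EE(A,W)^2$, plus a linear noise cross term, plus a pure-noise term independent of $A,W$. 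Definition~\eqref{eq:defE} is engineered so that the quadratic parts reassemble exactly into $\EE(A,W)^2$; the pure-noise terms $\frac{1}{dT}\norm{\bN_T}_F^2$ and $\frac1d\norm{N_{T+1}}_2^2$ are common to both sides of the optimality inequality and cancel.

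After this substitution the inequality reads $\EE(\hat A, \hat W)^2 \leq \EE(A,W)^2 + (\text{noise terms}) + \tau(\norm{A}_* - \norm{\hat A}_*) + \gamma(\norm{A}_1 - \norm{\hat A}_1) + \kappa(\norm{W}_1 - \norm{\hat W}_1)$, where the collected noise terms are linear in the differences $\hat A - A$ and $\hat W - W$. I would then identify them with $M$ and $\Xi$: using $\om(B)_j = \inr{\Omega_j, B}$ one recognizes the coefficient of $\hat A - A$ as $\frac2d\inr{\hat A - A, M}$, while the terms linear in $\hat W - W$ coming from both quadratic terms fold, through $\Xi = \bX_{T-1}^\top \bN_T + \om(A_T)N_{T+1}^\top$, into an inner product paired with $\Xi$ and governed by $\frac{2}{dT}\norm{\Xi}_\infty$.

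The noise is then dominated by duality. For the $A$-term I split $M = \alpha M + (1-\alpha)M$ and apply the trace-norm/operator-norm pairing $\inr{\hat A - A, M}\leq \norm{\hat A - A}_*\normop{M}$ to the first piece and the $\ell_1/\ell_\infty$ pairing to the second, so that the conditions $\tau \geq \frac{2\alpha}{d}\normop{M}$ and $\gamma \geq \frac{2(1-\alpha)}{d}\norm{M}_\infty$ yield $\frac2d\inr{\hat A - A, M} \leq \tau\norm{\hat A - A}_* + \gamma\norm{\hat A - A}_1$; the $\ell_1/\ell_\infty$ pairing together with $\kappa \geq \frac2{dT}\norm{\Xi}_\infty$ handles the $W$-term analogously. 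To finish, I would apply the triangle inequality $\norm{\hat A - A}_* \leq \norm{\hat A}_* + \norm{A}_*$ (and likewise for the other two norms) to these bounds and add the telescoping penalty differences: each pair collapses, e.g. $\tau(\norm{A}_* - \norm{\hat A}_*) + \tau(\norm{\hat A}_* + \norm{A}_*) = 2\tau\norm{A}_*$, and similarly for $\gamma$ and $\kappa$. This gives $\EE(\hat A, \hat W)^2 \leq \EE(A,W)^2 + 2\tau\norm{A}_* + 2\gamma\norm{A}_1 + 2\kappa\norm{W}_1$, and taking the infimum over $(A,W) \in \cA \times \cW$ closes the argument.

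I expect the main obstacle to be the bookkeeping of the expansion: verifying that the quadratic parts recombine precisely into $\EE$ and, above all, that the scattered linear noise contributions in $\hat W - W$ coming from the two quadratic terms fold correctly into $\Xi$ with the normalization matching the threshold imposed on $\kappa$. The $\alpha$-weighted split of $M$ between the operator and $\ell_\infty$ norms is the device that produces the announced trade-off, namely the freedom to rebalance the low-rank and sparsity constants at fixed sum.
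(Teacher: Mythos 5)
Your proposal is correct and follows essentially the same route as the paper's proof: the paper organizes the quadratic expansion through an auxiliary linear map $\Phi$ and an empirical risk $R_n$, and isolates the noise identification (the matrices $M$ and $\Xi$) in a separate lemma, but the content --- optimality inequality $\LL(\hat A,\hat W)\leq \LL(A,W)$, expansion of the squares via the model equations, $\alpha$-weighted H\"older duality (trace/operator and $\ell_1/\ell_\infty$ pairings), triangle inequality, and telescoping against the penalty differences --- is exactly what you describe. The one delicate point you rightly flag, namely that the $t\leq T$ and $t=T+1$ noise contributions must fold into $\Xi$ with the single normalization $\tfrac{2}{dT}$ matching the threshold on $\kappa$, is precisely what the paper's Lemma~1 asserts (stated there as a ``direct computation'' without proof), so your bookkeeping concern coincides with the only step the paper itself leaves unverified.
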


For the proof of oracle inequalities of fast type, the
\emph{restricted eigenvalue} (RE) condition introduced
in~\cite{Bickel09} and~\cite{MR2555200, MR2500227} is of importance.
Restricted eigenvalue conditions are implied by, and in general weaker
than, the so-called \emph{incoherence} or RIP (Restricted isometry
property, \cite{Candes05}) assumptions, which excludes, for instance,
strong correlations between covariates in a linear regression
model. This condition is acknowledged to be one of the weakest to
derive fast rates for the Lasso (see~\cite{MR2576316} for a comparison
of conditions).

Matrix version of these assumptions are introduced in
\cite{Koltchinskii11}. Below is a version of the RE assumption that
fits in our context. First, we need to introduce the two restriction
cones.

The first cone is related to the $\norm{W}_1$ term used in
procedure~\eqref{eq:Shat_W_hat_def}. If $W \in \R^{d \times d}$, we
denote by $\Theta_W = \sign(W) \in \{ 0, \pm 1 \}^{d \times d}$ the
signed sparsity pattern of $W$ and by $\Theta_W^\perp \in \{ 0, 1
\}^{d \times d}$ the orthogonal sparsity pattern. For a fixed matrix
$W \in \R^{d \times d}$ and $c > 0$, we introduce the cone
\begin{equation*}
  \cC_1(W, c) \doteq \Big\{ W' \in \mathcal{W} :
  \|\Theta_W^\perp \circ W' \|_1 \leq c \|\Theta_W \circ W' \|_1
  \Big\}.
\end{equation*}
This cone contains the matrices $W'$ that have their largest entries
in the sparsity pattern of $W$.

The second cone is related to mixture of the terms $\norm{A}_*$ and
$\norm{A}_1$ in procedure~\eqref{eq:Shat_W_hat_def}. Before defining
it, we need further notations and definitions. 

For a fixed $A \in \R^{n \times n}$ and $c, \beta > 0$, we introduce
the cone
\begin{equation*}
  \cC_2(A, c, \beta) \doteq \Big\{ A' \in \cA : \| \cP_{A}^\perp(A') \|_*
  + \beta \|\Theta_A^\perp \circ A'\|_1
  \leq c \Big(  \| \mathcal{P}_{A}(A') \|_* + \beta \|\Theta_A \circ
  A' \|_1 \Big)  \Big\}.
\end{equation*}
This cone consist of the matrices $A'$ with large entries close to
that of $A$ and that are ``almost aligned'' with the row and column
spaces of $A$. The parameter $\beta$ quantifies the interplay between
these too notions.

% \begin{definition}[Cone of restriction $\mathcal{C}_{S,c,\beta}$]
% For a matrix $A \in \mathcal{A} \subset \RR^{n \times n}$ of rank
% $r$, and sparsity $k$, we define the cone of restriction, that
% contains matrices that projection onto the singular spaces of  $A$
% dominate the orthogonal projection and such that the projection onto
% the sparsity pattern dominates the orthogonal with a trade-off ratio
% $\beta$ between the two constraints. Let $A = U \Sigma V\trans$ be
% the SVD of $A$,  $U^\perp$ and $V^{\perp}$ matrices of size $n
% \times (n-r)$ ortho-normally completing the bases of $U$ and $V$,
% and for any matrix $B \in \RR^{n\times n}$ define the orthogonal
% projections $\mathcal{P}_{A}^\perp(B) = P_{U^\perp} B P_{V^\perp}$
% and $\mathcal{P}_{A}(B) = B -  \mathcal{P}_{A}^\perp(B)$. Let
% $\Theta_A = \sign(S)$ be the signed sparsity pattern of $A$, and
% $\Theta_A^\perp \in \{0,1 \}^{n\times n}$ the orthogonal sparsity
% pattern.\\
 
\begin{definition}[Restricted Eigenvalue (RE)]
  \label{ass:RE}
  For $W \in \cW$ and $c > 0$, we introduce
  \begin{equation*}
    \mu_1(W, c) = \inf \Big\{ \mu > 0 : \norm{\Theta_W \circ W'}_F \leq
    \frac{\mu}{\sqrt{d T}} \norm{\bX_{T+1} W'}_F, \;\; \forall W' \in
    \cC_1(W, c) \Big\}.
  \end{equation*}
  For $A \in \cA$ and $c, \beta > 0$, we introduce
  \begin{align*}
    \mu_2(A, W, c, \beta) = \inf \Big\{ \mu > 0 :\;&\norm{\cP_A(A')}_F
    \vee
    \norm{\Theta_A \circ A'}_F \\
    &\leq \frac{\mu}{\sqrt d} \norm{W'^\top \omega(A_T) -
      \omega(A')}_2, \;\; \forall W' \in \cC_1(W, c), \forall A' \in
    \cC_2(A, c, \beta) \Big\}.
  \end{align*}
\end{definition}
The RE assumption consists of assuming that the constants $\mu_1$ and $\mu_2$ are non-zero. Now we can state the following Theorem that gives a fast oracle
inequality for our procedure using RE.

\begin{theorem}
  \label{prop:fastrateRE}
  Let $(\hat A, \hat W)$ be given by~\eqref{eq:Shat_W_hat_def} and
  suppose that
  \begin{equation}
    \label{eq:thm-fast-rate-smoothing-parameters}
    \tau \geq \frac{3\alpha}{d} \| M \|_{\op}, \quad \gamma \geq
    \frac{3 (1-\alpha)}{d} \| M \|_\infty \;\; \text{ and } \;\;
    \kappa \geq \frac{3}{d T}\|\Xi\|_\infty
  \end{equation}
  for some $\alpha \in (0, 1)$. Then, we have
  \begin{align*}
    % \label{eq:fastrate}
    \EE(\widehat{A}, \widehat{W})^2 \leq \inf_{(A, W) \in \cA \times
      \cW} \Big\{ \EE(A, W)^2 &+ \frac{25}{18} \mu_2(A, W)^2 \big(
    \rank(A) \tau^2 + \norm{A}_0 \gamma^2) \\
    &+ \frac{25}{36} \mu_1(W)^2 \norm{W}_0 \kappa^2 \Big\},
  \end{align*}
  where $\mu_1(W) = \mu_1(W, 5)$ and $\mu_2(A, W) = \mu_2(A, W, 5,
  \gamma / \tau)$ (see Definition~\ref{ass:RE}).
\end{theorem}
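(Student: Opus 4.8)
The plan is to run the standard fast-rate argument for penalized least squares, adapted to the two coupled quadratic data-fitting terms and to the mixed trace-plus-$\ell_1$ penalty. The starting point is the optimality $\cL(\widehat A,\widehat W)\le\cL(A,W)$ for every admissible $(A,W)$. I would substitute the two model identities $\bX_T=\bX_{T-1}W_0+\bN_T$ and $\omega(A_{T+1})=W_0^\top\omega(A_T)+N_{T+1}$ into the two quadratic terms of $\cL$ and expand the squares. The noise-only squares are constant and cancel, the signal squares reassemble \emph{exactly} into $\EE(\widehat A,\widehat W)^2-\EE(A,W)^2$, and the cross terms are linear in $\widehat A-A$ and $\widehat W-W$. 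Using $\inr{\omega(B),N}=\inr{\sum_j N_j\Omega_j,\,B}$ together with the definitions of $\bX_{T-1}$, $M$ and $\Xi$, these linear terms are identified as $\tfrac2d\inr{M,\widehat A-A}$ in the adjacency direction and $\tfrac2{dT}\inr{\Xi,\widehat W-W}$ in the feature direction, giving the basic inequality
\begin{equation*}
\EE(\widehat A,\widehat W)^2\le\EE(A,W)^2+\tfrac2d\inr{M,\widehat A-A}+\tfrac2{dT}\inr{\Xi,\widehat W-W}+\tau(\|A\|_*-\|\widehat A\|_*)+\gamma(\|A\|_1-\|\widehat A\|_1)+\kappa(\|W\|_1-\|\widehat W\|_1).
\end{equation*}

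Next I would control the two noise inner products by convex duality. Splitting $\inr{M,\widehat A-A}=\alpha\inr{M,\widehat A-A}+(1-\alpha)\inr{M,\widehat A-A}$ and pairing the first copy with the trace norm (dual $\normop{\cdot}$) and the second with $\|\cdot\|_1$ (dual $\|\cdot\|_\infty$), condition~\eqref{eq:thm-fast-rate-smoothing-parameters} yields $\tfrac2d\inr{M,\widehat A-A}\le\tfrac23\tau\|\widehat A-A\|_*+\tfrac23\gamma\|\widehat A-A\|_1$, and likewise $\tfrac2{dT}\inr{\Xi,\widehat W-W}\le\tfrac23\kappa\|\widehat W-W\|_1$; the factor $3$ (versus the $2$ of Theorem~\ref{thm:slowrate}) is chosen precisely to leave a $\tfrac13$ margin. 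Combining with the subgradient inequalities $\|A\|_*-\|\widehat A\|_*\le\|\cP_A(\widehat A-A)\|_*-\|\cP_A^\perp(\widehat A-A)\|_*$ and $\|A\|_1-\|\widehat A\|_1\le\|\Theta_A\circ(\widehat A-A)\|_1-\|\Theta_A^\perp\circ(\widehat A-A)\|_1$ (and the analogue for $W$), and using the exact splittings $\|\cdot\|_*=\|\cP_A(\cdot)\|_*+\|\cP_A^\perp(\cdot)\|_*$, $\|\cdot\|_1=\|\Theta\circ\cdot\|_1+\|\Theta^\perp\circ\cdot\|_1$, the on-support terms collect with coefficient $\tfrac53$ and the off-support terms with coefficient $\tfrac13$. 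Dropping the nonnegative $\EE(\widehat A,\widehat W)^2$ then forces the off-support mass to be at most $5$ times the on-support mass, i.e. $\widehat W-W\in\cC_1(W,5)$ and $\widehat A-A\in\cC_2(A,5,\gamma/\tau)$; the ratio $5=(5/3)/(1/3)$ fixes the cone constant, and factoring $\tau$ out of the adjacency terms produces the interplay parameter $\beta=\gamma/\tau$.

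On these cones I would invoke the RE constants of Definition~\ref{ass:RE}: $\mu_2$ bounds $\|\cP_A(\widehat A-A)\|_F$ and $\|\Theta_A\circ(\widehat A-A)\|_F$ by the prediction residual $\tfrac1{\sqrt d}\|(\widehat W-W)^\top\omega(A_T)-\omega(\widehat A-A)\|_2$, and $\mu_1$ bounds $\|\Theta_W\circ(\widehat W-W)\|_F$ by $\tfrac1{\sqrt{dT}}\|\bX_{T-1}(\widehat W-W)\|_F$. Converting the surviving on-support nuclear and $\ell_1$ norms to Frobenius norms by Cauchy--Schwarz, $\|\cP_A(\cdot)\|_*\le\sqrt{2\rank A}\,\|\cP_A(\cdot)\|_F$, $\|\Theta_A\circ\cdot\|_1\le\sqrt{\|A\|_0}\,\|\cdot\|_F$ and $\|\Theta_W\circ\cdot\|_1\le\sqrt{\|W\|_0}\,\|\cdot\|_F$, turns the right-hand side into rank/sparsity factors multiplied by this self-relative error. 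A final application of $2ab\le a^2/\epsilon+\epsilon b^2$, with the self-relative error absorbed into the left-hand side and bounded by $\EE(\widehat A,\widehat W)+\EE(A,W)$ through the triangle inequality, produces a bound of the claimed form, the constants $\tfrac{25}{18}$ and $\tfrac{25}{36}$ arising from the factor $(5/3)^2$ together with the Young split.

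The genuinely delicate part is the passage of Steps three and four rather than the algebra of Step one. One must establish membership in the \emph{coupled} cone $\cC_2(A,5,\gamma/\tau)$, in which the trace and $\ell_1$ components of $\widehat A-A$ are controlled jointly, and then run the two-block RE so that the adjacency block and the feature block are both governed by the single shared residual $(\widehat W-W)^\top\omega(A_T)-\omega(\widehat A-A)$; this coupling between the two optimization variables through $\EE$ is what distinguishes the argument from the classical Lasso proof. Keeping the leading constant equal to $1$ on $\EE(A,W)^2$ while splitting the Young inequality so as to land on exactly $\tfrac{25}{18}$ and $\tfrac{25}{36}$ is the remaining bookkeeping.
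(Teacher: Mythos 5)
Your Step~1 is correct --- the basic inequality
$\EE(\hat A,\hat W)^2 \le \EE(A,W)^2 + \frac 2d\inr{M,\hat A-A}+\frac{2}{dT}\inr{\Xi,\hat W-W}+\tau(\norm{A}_*-\norm{\hat A}_*)+\gamma(\norm{A}_1-\norm{\hat A}_1)+\kappa(\norm{W}_1-\norm{\hat W}_1)$
is exactly the paper's starting point for the \emph{slow-rate} Theorem~\ref{thm:slowrate}, and your noise-duality and decomposability steps are fine. But the fast-rate proof cannot be completed from this starting point, and the paper does not attempt to: it works instead from the first-order optimality condition~\eqref{eq:argmin-caracterization} (a subgradient of $\cL$ at $(\hat A,\hat W)$ lying in the normal cone of $\cA\times\cW$), monotonicity of the subdifferential, and the Pythagoras identity~\eqref{eq:pythagoras}. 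Concretely, your Step on cone membership fails: after decomposability your right-hand side still carries $\EE(A,W)^2$, so dropping $\EE(\hat A,\hat W)^2\ge 0$ yields only
\begin{equation*}
  \begin{split}
    \tau\norm{\cP_A^\perp(\hat A-A)}_* &+ \gamma\norm{\Theta_A^\perp\circ(\hat A-A)}_1+\kappa\norm{\Theta_W^\perp\circ(\hat W-W)}_1 \\
    &\le 3\,\EE(A,W)^2 + 5\Bigl(\tau\norm{\cP_A(\hat A-A)}_*+\gamma\norm{\Theta_A\circ(\hat A-A)}_1+\kappa\norm{\Theta_W\circ(\hat W-W)}_1\Bigr),
  \end{split}
\end{equation*}
which is \emph{not} membership in $\cC_1(W,5)$ and $\cC_2(A,5,\gamma/\tau)$ unless $\EE(A,W)=0$, i.e.\ essentially only at the reference pair $(A_{T+1},W_0)$. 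Your argument therefore proves the bound at the truth, but not the stated infimum over all $(A,W)\in\cA\times\cW$. In the paper the quadratic terms enter through the inner product $\inr{\Phi(\hat A-A_{T+1},\hat W-W_0),\Phi(\hat A-A,\hat W-W)}$ of~\eqref{eq:thm2-main-decomposition}: either this is nonpositive, and then~\eqref{eq:pythagoras} makes the theorem trivial for that $(A,W)$, or it is positive, and then it can be discarded to give the cone inequality~\eqref{eq:proof-of-cone-constraint1} with no $\EE(A,W)^2$ term at all.

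Your final absorption step fails for a related reason. After RE and Young, the right-hand side contains $\frac 1d\norm{\Phi(\hat A-A,\hat W-W)}_F^2$ (your ``self-relative error'': each of the two $\mu_2$ terms contributes half of its first component, and the $\mu_1$ term contributes its second component via the asymmetric split $ab\le a^2/4+b^2$, which is where $25/36$ comes from). In the basic-inequality route nothing cancels this quantity. Bounding it by the triangle inequality as $\le 2\EE(\hat A,\hat W)^2+2\EE(A,W)^2$ and ``absorbing into the left-hand side'' leaves the coefficient $1-2=-1$ in front of $\EE(\hat A,\hat W)^2$, so the inequality becomes vacuous; choosing a smaller Young parameter so that absorption is possible forces the leading constant in front of $\EE(A,W)^2$ strictly above $1$ and destroys the constants $\frac{25}{18}$ and $\frac{25}{36}$ --- you would obtain a non-sharp oracle inequality, not the stated one. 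The paper's mechanism is designed precisely to avoid this: Pythagoras~\eqref{eq:pythagoras} places $+\frac 1d\norm{\Phi(\hat A-A,\hat W-W)}_F^2$ on the \emph{left-hand} side from the outset, and the Young terms cancel it exactly, which is how the sharp leading constant $1$ and the constants $\frac{25}{18}$, $\frac{25}{36}$ arise. (Two smaller slips: the trace norm does not split exactly, $\norm{\cdot}_*\le\norm{\cP_A(\cdot)}_*+\norm{\cP_A^\perp(\cdot)}_*$ is only the triangle inequality, though that is the direction you need; and your factor $\sqrt{2\rank A}$ would produce $25/9$ rather than the paper's $25/18$.)
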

The proofs of Theorems~\ref{thm:slowrate}
and~\ref{prop:fastrateRE} use tools introduced in
\cite{Koltchinskii11} and \cite{Bickel09}.

Note that the residual term from this oracle inequality mixes the
notions of sparsity of $A$ and $W$ via the terms $\rank(A)$,
$\norm{A}_0$ and $\norm{W}_0$. It says that our mixed penalization
procedure provides an optimal trade-off between fitting the data and
complexity, measured by both sparsity and low-rank. This is the first
result of this nature to be found in literature.

In the next Theorem~\ref{thm:convergence-rates}, we obtain convergence
rates for the procedure~\eqref{eq:Shat_W_hat_def} by combining
Theorem~\ref{prop:fastrateRE} with controls on the noise processes. We introduce
\begin{align*}
  v_{\Omega, \op}^2 &= \bignorm{\frac 1d \sum_{j=1}^d \Omega_j^\top
    \Omega_j}_{\op} \vee \bignorm{\frac 1d \sum_{j=1}^d \Omega_j
    \Omega_j^\top}_{\op}, \quad v_{\Omega, \infty}^2 = \bignorm{\frac
    1d \sum_{j=1}^d \Omega_j \circ \Omega_j}_{\infty}, \\
  \sigma_\omega^2 &= \max_{j=1, \ldots, d} \frac{1}{T+1}
  \sum_{t=1}^{T+1} \omega_j(A_{t-1})^2,
\end{align*}
which are the (observable) variance terms that naturally appear in the
controls of the noise processes. We introduce also
\begin{equation*}
  \ell_T = 2 \max_{j = 1, \ldots, d} \log \log \bigg(
  \frac{\sum_{t=1}^{T+1} \omega_j(A_{t-1})^2}{T+1} \vee
  \frac{T+1}{\sum_{t=1}^{T+1} \omega_j(A_{t-1})^2} \vee e \bigg),
\end{equation*}
which is a small (observable) technical term that comes out of our
analysis of the noise process $\Xi$. This term is a small price to pay
for the fact that no independence assumption is required on the noise
sequence $(N_t)_{t \geq 0}$, but only a martingale increment structure
with sub-gaussian tails.

\begin{theorem}
  \label{thm:convergence-rates}
  Consider the procedure $(\hat A, \hat W)$ given
  by~\eqref{eq:Shat_W_hat_def} with smoothing parameters given by
  \begin{align*}
    \tau &= 3 \alpha \sigma v_{\Omega, \op} \sqrt{\frac{2(x +
        \log(2n))}{d}}, \quad \gamma = 3 (1-\alpha) \sigma v_{\Omega,
      \infty} \sqrt{\frac{2(x + 2 \log n)}{d}}, \\
    \kappa &= 6 \sigma \sigma_\omega \frac 1d \sqrt{\frac{2 e (x + 2
        \log d + \ell_T)}{T+1}}
  \end{align*}
  for some $\alpha \in (0, 1)$ and fix a confidence level $x >
  0$. Then, we have
  \begin{align*}
    % \label{eq:fastrate}
    \EE(\widehat{A}, \widehat{W})^2 \leq \inf_{(A, W) \in \cA \times
      \cW} \Big\{ \EE(A, W)^2 &+ 25 \mu_2(A)^2 \rank(A) \alpha^2
    \sigma^2 v_{\Omega, \op}^2 \frac{2(x + \log(2n))}{d} \\
    &+ 25 \mu_2(A)^2 \norm{A}_0 (1-\alpha)^2 \sigma^2 v_{\Omega,
      \infty}^2 \frac{2(x + 2 \log n)}{d} \\
    &+ 25 \mu_1(W)^2 \norm{W}_0 \sigma^2 \sigma_\omega^2 \frac{2 e (x
      + 2 \log d + \ell_T)}{d^2(T+1)} \Big\}
  \end{align*}  
  with a probability larger than $1 - 17 e^{-x}$, where $\mu_1$ and
  $\mu_2$ are the same as in Theorem~\ref{prop:fastrateRE}. 
\end{theorem}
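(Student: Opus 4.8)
The plan is to derive Theorem~\ref{thm:convergence-rates} from the fast oracle inequality of Theorem~\ref{prop:fastrateRE} by a purely probabilistic argument. Theorem~\ref{prop:fastrateRE} holds deterministically as soon as the three smoothing parameters dominate the noise, that is, as soon as $\tau \geq \tfrac{3\alpha}{d}\normop{M}$, $\gamma \geq \tfrac{3(1-\alpha)}{d}\norm{M}_\infty$ and $\kappa \geq \tfrac{3}{dT}\norm{\Xi}_\infty$. With the explicit values of $\tau,\gamma,\kappa$ prescribed in the statement, each of these conditions is equivalent to a deviation bound on one of the random matrices $M$ or $\Xi$. I would therefore first establish that, on an event of probability at least $1-17e^{-x}$, the three inequalities hold simultaneously; on that event, substituting $\tau^2,\gamma^2,\kappa^2$ into the residual term $\tfrac{25}{18}\mu_2^2(\rank(A)\tau^2+\norm{A}_0\gamma^2)+\tfrac{25}{36}\mu_1^2\norm{W}_0\kappa^2$ of Theorem~\ref{prop:fastrateRE} and collecting the numerical factors yields exactly the three summands announced here.

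The two deviation bounds for $M=-\sum_{j}(N_{T+1})_j\Omega_j$ are the easier ones, because the coordinates of $N_{T+1}$ are independent and $\sigma$-sub-gaussian by Assumption~\ref{ass:noise}, so $M$ is a sum of independent centred sub-gaussian multiples of the fixed matrices $\Omega_j$. For $\normop{M}$ I would invoke a matrix sub-gaussian (non-commutative Hoeffding) inequality applied to the Hermitian dilation: its variance proxy is governed by $\normop{\tfrac1d\sum_j\Omega_j^\top\Omega_j}$ and $\normop{\tfrac1d\sum_j\Omega_j\Omega_j^\top}$, i.e.\ by $\sigma^2 d\, v_{\Omega,\op}^2$, while the ambient dimension $2n$ of the dilation produces the $\log(2n)$ term; this gives $\normop{M}\leq\sigma v_{\Omega,\op}\sqrt{2d(x+\log(2n))}$ with the required probability. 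For $\norm{M}_\infty$, each entry $M_{k\ell}=-\sum_j(N_{T+1})_j(\Omega_j)_{k\ell}$ is scalar sub-gaussian with variance proxy at most $\sigma^2 d\, v_{\Omega,\infty}^2$, so a Chernoff bound together with a union bound over the $n^2$ entries gives the threshold $\sigma v_{\Omega,\infty}\sqrt{2d(x+2\log n)}$ with probability at least $1-2e^{-x}$. Calibrating $\tau$ and $\gamma$ to these thresholds is precisely the prescribed choice.

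The genuinely delicate step is the control of $\Xi=\sum_{t=1}^{T+1}\omega(A_{t-1})N_t^\top$. Its entries $\Xi_{jk}=\sum_t\omega_j(A_{t-1})(N_t)_k$ are \emph{martingales} in $t$, not sums of independent variables, since Assumption~\ref{ass:noise} only guarantees a martingale-increment structure in time. Conditionally on $\cF_{t-1}$ the increment $\omega_j(A_{t-1})(N_t)_k$ is sub-gaussian with variance proxy $\sigma^2\omega_j(A_{t-1})^2$, so the predictable quadratic variation equals $\sigma^2\sum_t\omega_j(A_{t-1})^2=\sigma^2(T+1)\sigma_\omega^2$ at the extremal index $j$. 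Because this variation is data-dependent and not bounded a priori, a single exponential-supermartingale inequality cannot be optimally calibrated; the remedy is a self-normalized martingale deviation bound obtained through a peeling argument over the dyadic range of $\tfrac1{T+1}\sum_t\omega_j(A_{t-1})^2$ and of its reciprocal. This stratification is exactly what produces the iterated-logarithm correction $\ell_T$, including the $\vee\,e$ truncation in its definition, and it is the price paid for dispensing with any cross-time independence on $(N_t)$. A union bound over the $d^2$ entries then places $\norm{\Xi}_\infty$ below the threshold matching the prescribed $\kappa$.

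I expect the control of $\norm{\Xi}_\infty$ to be the main obstacle: the self-normalized, martingale nature of $\Xi$ forces one to replace the elementary sub-gaussian union bound used for $M$ by a mixture/peeling argument, which is where all the technical subtlety (and the $\ell_T$ term) concentrates. Once the three events are in hand, the conclusion follows by intersecting them — the individual failure probabilities accumulate to $17e^{-x}$ — and by the straightforward algebra of inserting the squared smoothing parameters into the bound of Theorem~\ref{prop:fastrateRE}.
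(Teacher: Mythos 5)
Your proposal follows essentially the same route as the paper: the paper also deduces Theorem~\ref{thm:convergence-rates} by combining Theorem~\ref{prop:fastrateRE} with exactly the three noise controls you describe (a matrix sub-gaussian bound via the Hermitian dilation for $\normop{M}$, a scalar sub-gaussian bound with a union bound over the $n^2$ entries for $\norm{M}_\infty$, and a martingale exponential inequality with a peeling argument producing the $\ell_T$ term for $\norm{\Xi}_\infty$), then intersecting the three events so that the failure probabilities $e^{-x}+2e^{-x}+14e^{-x}$ sum to $17e^{-x}$ and substituting $\tau^2,\gamma^2,\kappa^2$ into the residual of the fast oracle inequality. The only cosmetic differences are that the paper's first two residual terms actually come out as $\tfrac{25}{2}$ rather than $25$ (the stated constant is a slight over-bound) and that the factor $6$ in $\kappa$ hides the bound $(T+1)/T\leq 2$ needed to pass from the concentration of $\tfrac{1}{T+1}\norm{\Xi}_\infty$ to the condition $\kappa\geq\tfrac{3}{dT}\norm{\Xi}_\infty$, points your write-up glosses over but which do not affect correctness.
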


The proof of Theorem~\ref{thm:convergence-rates} follows directly from
Theorem~\ref{prop:fastrateRE} basic noise control results. In the next Theorem, we propose
more explicit upper bounds for both the indivivual estimation of $W_0$
and the prediction of $A_{T+1}$.
\begin{theorem}
  \label{thm:take-away-message}
  Under the same assumptions as in
  Theorem~\ref{thm:convergence-rates}, for any $x > 0$
  the following inequalities hold with a probability larger than $1 -
  17e^{-x}$:
  \begin{equation}
    \label{eq:explicit-upper-bound-W-prediction}
    \begin{split}
      \frac{1}{d T}& \norm{\bX_T (\hat W - W_0)}_F^2 \\
      &\leq \inf_{A \in
        \cA} \Big\{ \frac 1d \norm{\omega(A) - \omega(A_{T+1})}_F^2 +
      \frac{25}{18} \mu_2(A, W)^2 \big(
      \rank(A) \tau^2 + \norm{A}_0 \gamma^2) \Big\} \\
      & \quad + \frac{25}{36} \mu_1(W_0)^2 \norm{W_0}_0 \kappa^2
    \end{split}
  \end{equation}
 
  \begin{equation}
    \label{eq:explicit-upper-bound-W-l1}
    \begin{split}
     & \norm{\hat W - W_0}_1 \leq  5 \mu_1(W_0)^2 \norm{W_0}_0 \kappa \\
      &+  6 \sqrt{\norm{W_0}_0} \mu_1(W_0) \inf_{A \in \cA} \sqrt{ \frac 1d \norm{\omega(A) - \omega(A_{T+1})}_F^2 +  \frac{25}{18} \mu_2(A, W)^2 \big(\rank(A) \tau^2 + \norm{A}_0 \gamma^2) } \\
      \end{split}
  \end{equation}

  \begin{equation}
    \label{eq:explicit-upper-bound-A-trace}    
    \begin{split}
      &\norm{\hat A - A_{T+1}}_* \leq5 \mu_1(W_0)^2 \norm{W_0}_0 \kappa+ (6 \sqrt{\rank A_{T+1}} + 5 \beta \sqrt{\norm{A_{T+1}}_0})
      \mu_2( A_{T+1}) \\
      &\quad \quad \quad \times \inf_{A \in \cA} \sqrt{ \frac 1d
        \norm{\omega(A) - \omega(A_{T+1})}_F^2 + \frac{25}{18}
        \mu_2(A, W)^2 \big( \rank(A) \tau^2 + \norm{A}_0 \gamma^2) }~.
      \\
       \end{split}
  \end{equation}
  
\end{theorem}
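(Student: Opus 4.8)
All three inequalities are post-processings of the fast oracle inequality of Theorem~\ref{prop:fastrateRE}, combined with the two restricted-eigenvalue constants of Definition~\ref{ass:RE}. The plan is to first read off the feature-prediction bound~\eqref{eq:explicit-upper-bound-W-prediction}, and then to turn it into the structured-norm estimation bounds~\eqref{eq:explicit-upper-bound-W-l1} and~\eqref{eq:explicit-upper-bound-A-trace} by passing from the $\ell_1$ and trace norms of the errors $\widehat W - W_0$ and $\widehat A - A_{T+1}$ to their Frobenius norms, applying the RE constants, and controlling the resulting prediction error by~\eqref{eq:explicit-upper-bound-W-prediction}.

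For~\eqref{eq:explicit-upper-bound-W-prediction} I would restrict the infimum in Theorem~\ref{prop:fastrateRE} to $W = W_0$. Then the second component of $\EE(A, W_0)^2$ vanishes and, by linearity of $\omega$, its first component is $\frac 1d \norm{\omega(A) - \omega(A_{T+1})}_F^2$, so the right-hand side collapses to the infimum over $A \in \cA$ displayed in~\eqref{eq:explicit-upper-bound-W-prediction} plus $\frac{25}{36}\mu_1(W_0)^2 \norm{W_0}_0 \kappa^2$. Since $\EE(\widehat A, \widehat W)^2$ is a sum of two nonnegative terms it dominates its feature-estimation component $\frac{1}{dT}\norm{\bX_{T-1}(\widehat W - W_0)}_F^2$, which is, up to the design-matrix indexing convention, the left-hand side of~\eqref{eq:explicit-upper-bound-W-prediction}; this gives the first inequality at once.

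The two estimation bounds rest on the fact, obtained inside the proof of Theorem~\ref{prop:fastrateRE}, that the smoothing conditions~\eqref{eq:thm-fast-rate-smoothing-parameters} force the errors into the restriction cones, $\widehat W - W_0 \in \cC_1(W_0, 5)$ and $\widehat A - A_{T+1} \in \cC_2(A_{T+1}, 5, \gamma/\tau)$. Granting this, the defining inequality of $\cC_1$ gives $\norm{\widehat W - W_0}_1 \leq 6\norm{\Theta_{W_0} \circ (\widehat W - W_0)}_1 \leq 6\sqrt{\norm{W_0}_0}\,\norm{\Theta_{W_0} \circ (\widehat W - W_0)}_F$ by Cauchy--Schwarz over the $\norm{W_0}_0$ active entries, and $\mu_1(W_0)$ then bounds this Frobenius norm by $\frac{\mu_1(W_0)}{\sqrt{dT}}\norm{\bX_{T-1}(\widehat W - W_0)}_F$, i.e. by $\mu_1(W_0)$ times the square root of the left-hand side of~\eqref{eq:explicit-upper-bound-W-prediction}. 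Likewise, decomposing $\norm{\widehat A - A_{T+1}}_* \leq \norm{\cP_{A_{T+1}}(\widehat A - A_{T+1})}_* + \norm{\cP_{A_{T+1}}^\perp(\widehat A - A_{T+1})}_*$ and using the $\cC_2$ inequality to absorb the orthogonal part yields $\norm{\widehat A - A_{T+1}}_* \leq 6\norm{\cP_{A_{T+1}}(\widehat A - A_{T+1})}_* + 5\beta\norm{\Theta_{A_{T+1}} \circ (\widehat A - A_{T+1})}_1$ with $\beta = \gamma/\tau$; since the range of $\cP_{A_{T+1}}$ has dimension of order $\rank A_{T+1}$ and $\Theta_{A_{T+1}}$ has $\norm{A_{T+1}}_0$ nonzero entries, each term is at most the corresponding square root ($\sqrt{\rank A_{T+1}}$ or $\sqrt{\norm{A_{T+1}}_0}$) times a Frobenius norm, which $\mu_2(A_{T+1})$ converts into $\frac{1}{\sqrt d}\norm{(\widehat W - W_0)^\top \omega(A_T) - \omega(\widehat A - A_{T+1})}_2$, the square root of the first component of $\EE(\widehat A, \widehat W)^2$.

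In both cases the final step inserts the prediction-error control of~\eqref{eq:explicit-upper-bound-W-prediction}: writing $B$ for the infimum appearing there, the relevant component of $\EE(\widehat A, \widehat W)^2$ is at most $B + \frac{25}{36}\mu_1(W_0)^2 \norm{W_0}_0 \kappa^2$, and the elementary inequality $\sqrt{B + c} \leq \sqrt B + \sqrt c$ separates the $\sqrt B$ contribution from the $\kappa$ contribution, the latter producing the additive term $5\mu_1(W_0)^2\norm{W_0}_0 \kappa$ of~\eqref{eq:explicit-upper-bound-W-l1}. I expect the only real obstacle to lie in the cone-membership claim: one has to return to the basic inequality coming from the optimality of $(\widehat A, \widehat W)$ and use the noise controls implied by~\eqref{eq:thm-fast-rate-smoothing-parameters} (the factor $3$, rather than the $2$ of the slow-rate version, is what leaves enough slack to reach the constant $c = 5$) to show the errors carry no large mass outside the support of $W_0$ or outside the row/column spaces and support of $A_{T+1}$. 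The subsequent conversions through Cauchy--Schwarz, the rank bound on $\cP_{A_{T+1}}$ and the two RE constants are routine; the delicate point is the numerical bookkeeping, in particular tracking how the $\kappa$-dependent cross term combines with the prefactors $6$ and $5\beta$ so as to reproduce the stated constants in~\eqref{eq:explicit-upper-bound-W-l1} and~\eqref{eq:explicit-upper-bound-A-trace}.
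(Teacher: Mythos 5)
Your proposal follows essentially the same route as the paper's own (very terse) proof: take $W = W_0$ in the oracle inequality to get \eqref{eq:explicit-upper-bound-W-prediction}, then use the cone memberships $\hat W - W_0 \in \cC_1(W_0, 5)$ and $\hat A - A_{T+1} \in \cC_2(A_{T+1}, 5, \gamma/\tau)$ established inside the proof of Theorem~\ref{prop:fastrateRE}, Cauchy--Schwarz over the supports, the RE constants $\mu_1, \mu_2$ to pass to $\EE(\hat A, \hat W)$, and the splitting $\sqrt{B + c} \leq \sqrt{B} + \sqrt{c}$. Your closing caveat about the constants in \eqref{eq:explicit-upper-bound-A-trace} is apt but reflects an issue in the paper rather than in your argument: the paper merely asserts that \eqref{eq:explicit-upper-bound-A-trace} ``follows exactly the same scheme,'' yet literally following that scheme produces a $\kappa$-term with prefactor $\big(6 \sqrt{\rank A_{T+1}} + 5 \beta \sqrt{\norm{A_{T+1}}_0}\big) \mu_2(A_{T+1}) \cdot \tfrac{5}{6} \mu_1(W_0) \sqrt{\norm{W_0}_0}$ rather than the stated additive term $5 \mu_1(W_0)^2 \norm{W_0}_0 \kappa$, which appears to have been carried over verbatim from \eqref{eq:explicit-upper-bound-W-l1}.
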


\section{Algorithms and Numerical Experiments}

\subsection{Generalized forward-backward algorithm for minimizing $\LL$}
We use the algorithm designed in \cite{raguet2011generalized} for minimizing our objective function. Note that this algorithm is preferable to the method introduced in \cite{Richard10} as it directly minimizes $\LL$ jointly in $(S,W)$ rather than alternately minimizing in $W$ and $S$. 

Moreover we use the novel joint penalty from \cite{Richard12} that is more suited for estimating graphs.
 The proximal operator for the trace norm is given by the shrinkage operation, if $Z = U \diag (\sigma_1, \cdots, \sigma_n) V^T$ is the singular value decomposition of $Z$, 
$$ \prox_{\tau ||.||_*}(Z) = U \diag ((\sigma_i - \tau)_+)_i V^T.$$
Similarly, the proximal operator for the $\ell_1$-norm is the soft thresholding operator defined by using the entry-wise product of matrices denoted by $\circ$: 
$$\prox_{\gamma ||.||_1} = \textrm{sgn}(Z) \circ (|Z| - \gamma)_+\,.$$
The algorithm converges under very mild conditions when the step size $\theta$ is smaller than $\frac{2}{L}$, where $L$ is the operator norm of the joint quadratic loss: 
$$\Phi : (A,W) \mapsto     \frac{1}{d T} \norm{\bX_T - \bX_{T-1} W}_F^2 + \frac{1}{d} \norm{\omega(A) - W^\top \omega(A_T)  }_F^2~~.$$

\begin{algorithm}[tbh]
   \caption{Generalized Forward-Backward to Minimize $\LL$}
   \label{alg:gfb}
\begin{algorithmic}
   \STATE Initialize $A, Z_1, Z_2, W, q = 2$
   %\FOR{$i=1$ {\bfseries to} $n_{\textrm{iterations}}$}
   \REPEAT
   \STATE Compute $(G_A,G_W) = \nabla_{A,W} \Phi (A, W)$.
   \STATE Compute $Z_1 = \prox_{q \theta \tau ||.||_{*}} (2A - Z_1 - \theta G_A)$
   \STATE Compute $Z_2 = \prox_{q \theta \gamma ||.||_{1}} (2A - Z_2 - \theta G_A)$
   \STATE Set $A = \frac{1}{q} \sum_{k=1}^q Z_k$
      \STATE Set $W = \prox_{\theta \kappa ||.||_{1}} (W - \theta G_W)$
   \UNTIL{convergence}
   \STATE {\bfseries return} $(A,W)$ minimizing $\LL$
\end{algorithmic}
\end{algorithm}

\subsection{A generative model for graphs having linearly
  autoregressive features}\label{sec:gen}

Let $V_0 \in \RR^{n \times r}$ be a sparse matrix, $V_0 ^\dagger $ its
pseudo-inverse such, that $V_0 ^\dagger V_0 = V_0\trans V_0
^{{\scriptscriptstyle \top} \dagger} = I_r$. Fix two sparse matrices
$W_0 \in \RR^{r \times r}$ and $U_0 \in \RR^{n \times r}$ .  Now
define the sequence of matrices $(A_t)_{t \geq 0}$ for $t = 1, 2,
\cdots $ by
\[
U_t = U_{t-1}W_0 + N_t\]
and
\[
A_t = U_tV_0\trans + M_t
\]
for i.i.d sparse noise matrices $N_t$ and $M_t$, which means that for any pair of indices $(i,j)$, with high probability $(N_t)_{i,j}=0$ and $(M_t)_{i,j}=0$. We define the linear feature map $\omega(A) = A V_0^{{\scriptscriptstyle \top} \dagger} $, and point out that 
\begin{enumerate}
\item The sequence $\bigg (\omega(A_t) \trans \bigg )_t = \bigg (U_t + M_t V_0^{{\scriptscriptstyle \top} \dagger} \bigg )_t$ follows the linear autoregressive relation \[ \omega(A_t) \trans = \omega(A_{t-1}) \trans W_0 + N_t +  M_t V_0^{{\scriptscriptstyle \top} \dagger} ~~.\]
\item For any time index $t$, the matrix $A_t$ is close to $U_tV_0$ that has rank at most $r$
\item The matrices $A_t$ and $U_t$ are both sparse by construction.
\end{enumerate}

\subsection{Empirical evaluation}
We tested the presented methods on synthetic data generated as in section (\ref{sec:gen}). In our experiments the noise matrices $M_t$ and $N_t$ where built by soft-thresholding {\it i.i.d.} noise $\mathcal{N}(0,\sigma^2)$. We took as input $T=10$  successive graph snapshots on $n=50$ nodes graphs of rank $r=5$. We used $d=10$ linear features, and finally the noise level was set to $\sigma = .5$. We compare our methods to standard baselines in link prediction. We use the area under the ROC curve as the measure of performance and report empirical results averaged over 50 runs with the corresponding confidence intervals in figure \ref{fig:perf}. The competitor methods are the {\it nearest neighbors} (NN) and static sparse and low-rank estimation, that is the link prediction algorithm suggested in \cite{Richard12}. The algorithm NN scores pairs of nodes with the number of common friends between them, which is given by $A^2$ when $A$ is the cumulative graph adjacency matrix $\widetilde{A_T} = \sum_{t=0}^TA_t$ and the static sparse and low-rank estimation is obtained by minimizing the objective $\|X-\widetilde{A_T}\|_F^2+ \tau\|X\|_*+\gamma\|X\|_1$, and can be seen as the closest {\it static} version of our method. The two methods {\it autoregressive low-rank} and {\it static low-rank} are regularized using only the trace-norm, (corresponding to forcing $\gamma=0$) and are slightly inferior to their sparse and low-rank rivals. Since the matrix $V_0$ defining the linear map $\omega$ is unknown we consider the feature map $\omega(A) = AV$ where $\widetilde{A_T} = U\Sigma V\trans$ is the SVD of $\widetilde{A_T}$. The parameters $\tau$ and $\gamma$ are chosen by 10-fold cross validation for each of the methods separately. 

\begin{figure}
\begin{center}
\includegraphics[width = 6cm]{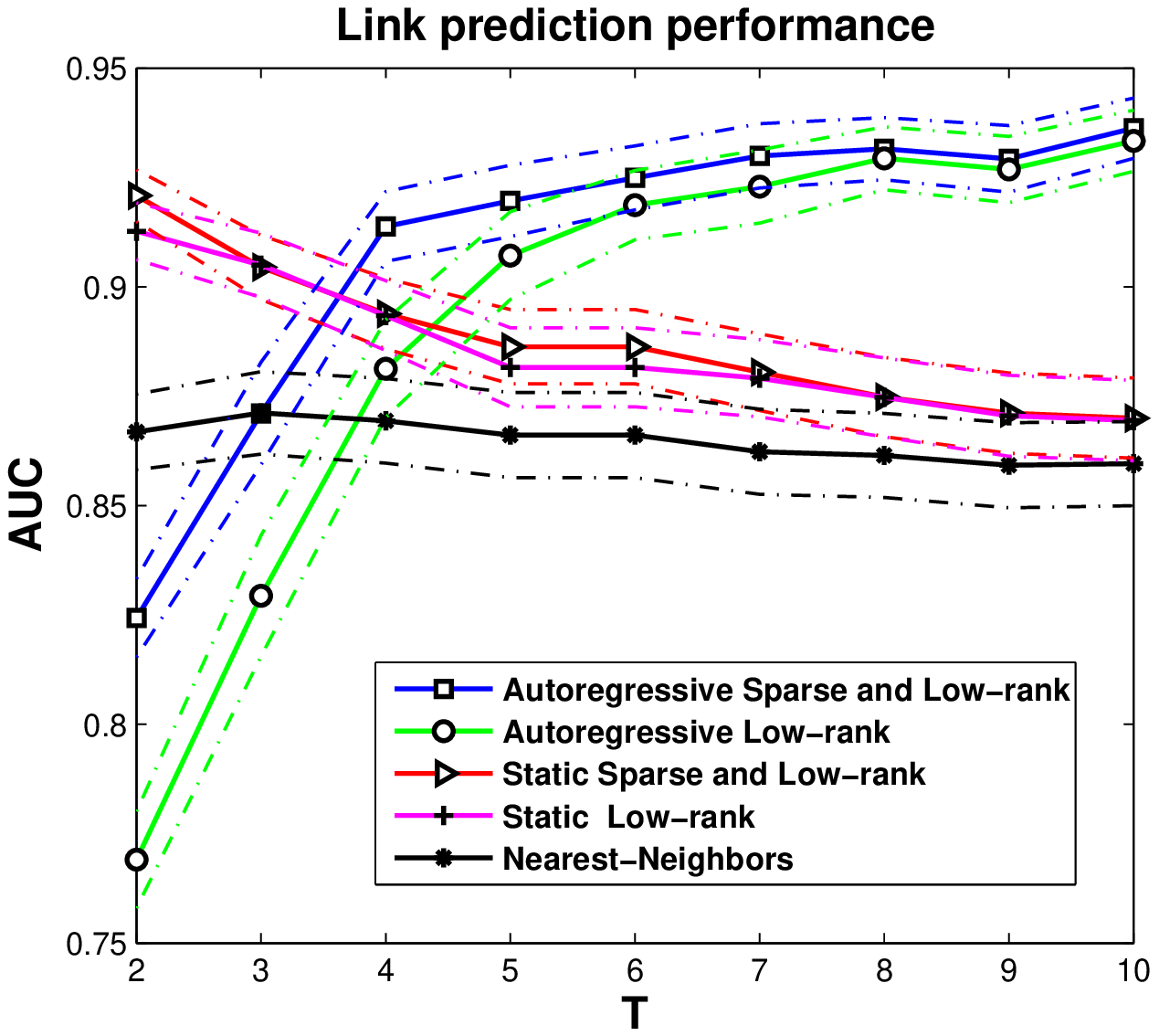}\includegraphics[width = 7cm]{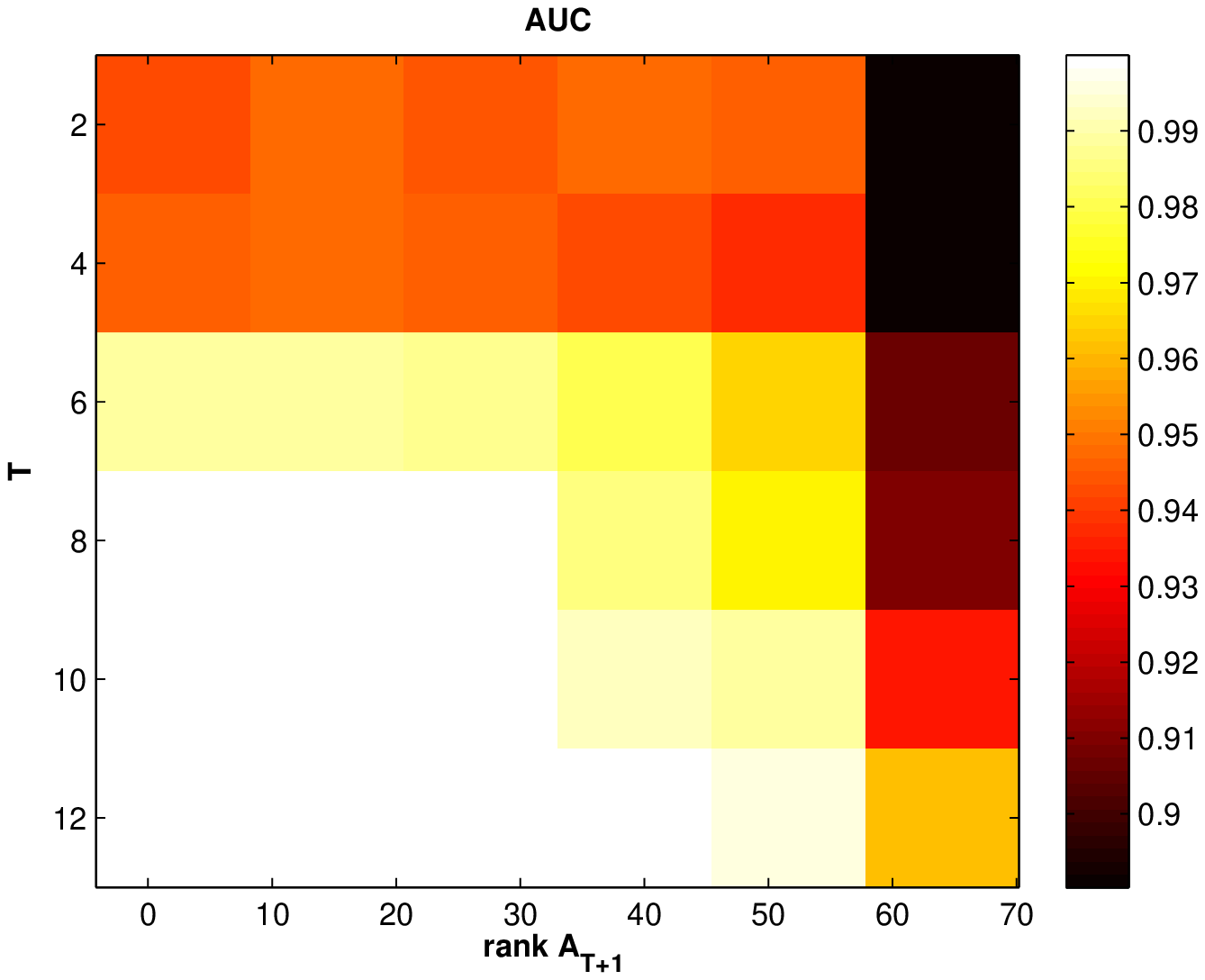}
\caption{Left: performance of algorithms in terms of Area Under the ROC Curve, average and confidence intervals over 50 runs. Right: Phase transition diagram. }
\end{center}
\label{fig:perf}
\end{figure}

\subsection{Discussion}

\begin{enumerate}
\item  {\it Comparison with the baselines.} This experiment sharply shows the benefit of using a temporal approach when one can handle the feature extraction task. The left-hand plot shows that if few snapshots are available ($T\leq 4$ in these experiments), then static approaches are to be preferred, whereas feature autoregressive approaches outperform as soon as {\it sufficient number} $T$ graph snapshots are available (see phase transition). The decreasing performance of static algorithms can be explained by the fact that they use as input a mixture of graphs observed at different time steps. Knowing that at each time step the nodes have specific latent factors, despite the slow evolution of the factors, adding the resulting graphs leads to confuse the factors. 
\item {\it Phase transition.} The right-hand figure is a phase transition diagram showing in which part of rank and time domain the estimation is accurate and illustrates the interplay between these two domain parameters.

 \item  {\it Choice of the feature map $\omega$.} In the current work we used the projection onto the vector space of the top-$r$ singular vectors of the cumulative adjacency matrix as the linear map $\omega$, and this choice has shown empirical superiority to other choices. The question of choosing the best measurement to summarize graph information as in compress sensing seems to have both theoretical and application potential.
Moreover, a deeper understanding of the connections of our problem with compressed sensing, for the construction and theoretical validation of the features mapping, is an important point that needs several developments. One possible approach is based on multi-kernel learning, that should be considered in a future work.
\item {\it Generalization of the method.} In this paper we consider only an autoregressive process of order 1. For better prediction accuracy, one could consider mode general models, such as vector ARMA models, and use model-selection techniques for the choice of the orders of the model. A general modelling based on state-space model could be developed as well. 
We presented a procedure for predicting graphs having linear autoregressive features. Our approach can easily be generalized to non-linear prediction through kernel-based methods.

\end{enumerate}

\appendix[Appendix : Proof of propositions]

\section{Proofs of the main results}
\label{sec:proofs}

From now on, we use the notation $\norm{(A, a)}_F^2 = \norm{A}_F^2 +
\norm{a}_2^2$ and $\inr{(A, a), (B, b)} = \inr{A, B} + \inr{a, b}$ for
any $A, B \in \R^{T \times d}$ and $a, b \in \R^d$.

Let us introduce the linear mapping $\Phi : \R^{n \times n} \times
\R^{d \times d} \rightarrow \R^{T \times d} \times \R^d$ given by
\begin{equation*}
  \Phi(A,W) = \Big( \frac{1}{\sqrt T} \bX_{T-1} W, \omega(A) - W^\top
  \omega(A_T) \Big).
\end{equation*}
Using this mapping, the objective~\eqref{eq:joint_objective} can be
written in the following reduced way:
\begin{equation*}
  \LL(A,W) = \frac{1}{d} \Big\|\Big(\frac{1}{\sqrt T} \bX_{T}, 0\Big)
  - \Phi(A,W) \Big\|_F^2 + \gamma
  \norm{A}_1 + \tau \norm{A}_* + \kappa \norm{W}_1.
\end{equation*}
Recalling that the error writes, for any $A$ and $W$:
\begin{equation*}
  \EE(A, W)^2 = \frac{1}{d} \norm{(W - W_0)^\top \omega(A_T) -
    \omega(A - A_{T+1})}_F^2 + \frac{1}{dT} \norm{\bX_{T-1} (W -
    W_0)}_F^2,
\end{equation*}
we have
\begin{equation*}
  \EE(A, W)^2 = \frac{1}{d} \big\| \Phi( A - A_{T+1}, W - W_0) \|_F^2.
\end{equation*}
Let us introduce also the empirical risk
\begin{equation*}
  R_n(A, W) = \frac{1}{d} \bignorm{\Big( \frac{1}{\sqrt T} \bX_T,
    0\Big) - \Phi(A, W)}_F^2.
\end{equation*}
The proofs of Theorem~\ref{thm:slowrate} and~\ref{prop:fastrateRE} are
based on tools developped in~\cite{Koltchinskii11} and
\cite{Bickel09}. However, the context considered here is very
different from the setting considered in these papers, so our proofs
require a different scheme.

\subsection{Proof of Theorem~\ref{thm:slowrate}}

First, note that 
\begin{align*}
  R_n(\hat A, \hat W) &- R_n(A, W) \\
  &= \frac 1d \Big( \norm{\Phi(\hat A, \hat W)}_F^2 - \norm{\Phi(A,
    W)}_F^2 - 2 \inr{ (\frac{1}{\sqrt T}\bX_T, 0), \Phi(\hat A - A,
    \hat W- W)} \Big).
\end{align*}
Since
\begin{align*}
  \frac 1d \Big(\norm{\Phi(\hat A, \hat W)}_F^2 &- \norm{\Phi(A,
    W)}_F^2 \Big) \\
  &= \EE(\hat A, \hat W)^2 - \EE(A, W)^2 + \frac 2d \inr{\Phi(\hat A -
    A, \hat W - W), \Phi(A_{T+1}, W_0)},
\end{align*}
we have
\begin{align*}
  R_n(\hat A, \hat W) &- R_n(A, W) \\
  &= \EE(\hat A, \hat W)^2 - \EE(A, W)^2 + \frac 2d \inr{\Phi(\hat A -
    A, \hat W - W), \Phi(A_{T+1}, W_0) - (\frac{1}{\sqrt T} \bX_T, 0)} \\
  &= \EE(\hat A, \hat W)^2 - \EE(A, W)^2 + \frac 2d \inr{\Phi(\hat A -
    A, \hat W - W), (-\frac{1}{\sqrt T}\bN_T, N_{T+1})}.
\end{align*}
The next Lemma will come in handy several times in the proofs.
\begin{lemma}
  \label{lem:noise-decomposition1}
  For any $A \in \R^{n \times n}$ and $W \in \R^{d \times d}$ we have
  \begin{equation*}
    \inr{ (\frac{1}{\sqrt T} \bN_{T}, -N_{T+1}), \Phi(A,W)} =
    \inr{(M, \frac 1T \Xi) , (A,W)} =
    \frac 1T \inr{W, \Xi} + \inr{A, M}.
  \end{equation*}
\end{lemma}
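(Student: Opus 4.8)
The plan is to prove the identity by directly expanding the inner product on the product space $\R^{T\times d}\times\R^d$ and then matching the resulting terms against the definitions of $M$ and $\Xi$. First I would use the rule $\inr{(A,a),(B,b)} = \inr{A,B}+\inr{a,b}$ together with $\Phi(A,W) = \big(\frac{1}{\sqrt T}\bX_{T-1}W,\, \omega(A) - W^\top\omega(A_T)\big)$ to split the left-hand side into three pieces:
\[
\inr{(\tfrac{1}{\sqrt T}\bN_T, -N_{T+1}),\, \Phi(A,W)} = \frac1T\inr{\bN_T, \bX_{T-1}W} - \inr{N_{T+1}, \omega(A)} + \inr{N_{T+1}, W^\top\omega(A_T)}.
\]
The first summand carries the $W$-dependence coming from the autoregressive block, while the last two come from the prediction block.

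The term $-\inr{N_{T+1},\omega(A)}$ is immediately identified with $\inr{A,M}$: expanding $\omega$ coordinatewise and using $M=-\sum_{j=1}^d(N_{T+1})_j\Omega_j$ gives $\inr{A,M} = -\sum_j (N_{T+1})_j\inr{A,\Omega_j} = -\sum_j(N_{T+1})_j\,\omega_j(A) = -\inr{N_{T+1},\omega(A)}$, which is exactly that term. This disposes of the $A$-part of the claim.

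For the $W$-part I would rewrite $\inr{\bN_T,\bX_{T-1}W}$ using the row structure of the stacked matrices. Since row $t$ of $\bN_T$ is $N_t^\top$ and row $t$ of $\bX_{T-1}$ is $\omega(A_{t-1})^\top$, trace cyclicity gives $\inr{\bN_T,\bX_{T-1}W} = \tr(\bN_T^\top\bX_{T-1}W) = \sum_{t=1}^T N_t^\top W^\top\omega(A_{t-1}) = \sum_{t=1}^T\inr{\omega(A_{t-1})N_t^\top,\, W}$, i.e. the contribution of the indices $t=1,\dots,T$ to $\inr{\Xi,W}$. The remaining cross term satisfies $\inr{N_{T+1},W^\top\omega(A_T)} = \inr{\omega(A_T)N_{T+1}^\top,\, W}$, which supplies precisely the $t=T+1$ summand of $\Xi = \sum_{t=1}^{T+1}\omega(A_{t-1})N_t^\top$. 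Assembling these recovers $\frac1T\inr{W,\Xi}$, and combined with the $A$-part this gives the claimed $\inr{(M,\frac1T\Xi),(A,W)}$.

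The only delicate point is bookkeeping rather than mathematics: one must track the transposes and the $1/\sqrt T$ normalization sitting on the first block of $\Phi$, and, most importantly, reconcile the summation ranges, since $\bN_T$ only stacks $N_1,\dots,N_T$ whereas $\Xi$ runs up to $T+1$. The missing index $t=T+1$ is not produced by the autoregressive block at all but is recovered from the prediction block through the cross term $\inr{N_{T+1},W^\top\omega(A_T)}$; verifying that this term enters $\frac1T\inr{W,\Xi}$ with the correct normalization relative to the common factor $\frac1T$ on the first $T$ indices is the one computation that needs genuine care. Everything else reduces to routine trace manipulations.
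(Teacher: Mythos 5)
Your three intermediate identifications are all correct, but the final assembly---the step you yourself single out as ``the one computation that needs genuine care''---is exactly where the argument fails, and you assert it rather than check it. Carrying your own decomposition to the end gives
\begin{equation*}
  \inr{ (\tfrac{1}{\sqrt T} \bN_{T}, -N_{T+1}), \Phi(A,W)}
  = \inr{A, M} + \frac 1T \sum_{t=1}^{T} \inr{\omega(A_{t-1}) N_t^\top, W}
  + \inr{\omega(A_T) N_{T+1}^\top, W},
\end{equation*}
because the summands $t = 1, \ldots, T$ inherit the factor $\tfrac 1T$ from the two $\tfrac{1}{\sqrt T}$ normalizations (one on $\bN_T$, one on the first block of $\Phi$), while the $t = T+1$ summand comes from the prediction block, which carries no such normalization, and therefore enters with coefficient $1$. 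By contrast, with the paper's definition $\Xi = \sum_{t=1}^{T+1} \omega(A_{t-1}) N_t^\top$, the claimed right-hand side is
\begin{equation*}
  \inr{A, M} + \frac 1T \inr{W, \Xi}
  = \inr{A, M} + \frac 1T \sum_{t=1}^{T} \inr{\omega(A_{t-1}) N_t^\top, W}
  + \frac 1T \inr{\omega(A_T) N_{T+1}^\top, W},
\end{equation*}
so the two sides differ by $\bigl(1 - \tfrac 1T\bigr) \inr{\omega(A_T) N_{T+1}^\top, W}$, which is nonzero in general whenever $T > 1$. Your claim that the cross term ``supplies precisely the $t=T+1$ summand of $\Xi$'' with the correct weight is therefore false: it supplies it with weight $1$, not $\tfrac 1T$.

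What your computation actually establishes is the identity with $\Xi$ replaced by $\Xi' = \sum_{t=1}^{T} \omega(A_{t-1}) N_t^\top + T\, \omega(A_T) N_{T+1}^\top$; the statement as printed holds verbatim only when $T = 1$. The paper offers no proof (it declares the lemma ``a direct computation'' and omits it), and a careful direct computation in fact exposes this normalization mismatch, which matters downstream: the condition $\kappa \geq \frac{2}{dT}\norm{\Xi}_\infty$ and the concentration bound for $\Xi$ are calibrated to the paper's $\Xi$, whereas the noise functional that genuinely appears is $\frac 1T \sum_{t=1}^{T} \omega(A_{t-1}) N_t^\top + \omega(A_T) N_{T+1}^\top$. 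A correct write-up must either prove the corrected identity and note the discrepancy, or redefine $\Xi$ accordingly---it cannot simply assert that the weights agree.
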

This Lemma follows from a direct computation, and the proof is thus
omitted.  This Lemma entails, together with~\eqref{eq:Shat_W_hat_def},
that
\begin{align*}
  \EE(\hat A, \hat W)^2 &\leq \EE(A, W)^2 + \frac {2}{dT} \inr{\hat W
    - W, \Xi} + \frac 2d \inr{\hat A - A, M} \\
  &\quad +\tau ( \|A\|_* - \|\widehat{A}\|_*) + \gamma ( \|A\|_1 - \|
  \widehat{A} \|_1 ) + \kappa ( \|W\|_1 - \|\widehat{W}\|_1).
\end{align*}
Now, using H\"older's inequality and the triangle inequality, and
introducing $\alpha \in (0, 1)$, we obtain
\begin{align*}
  \EE(\hat A, \hat W)^2 \leq \EE(A, W)^2 &+ \Big( \frac{2 \alpha}{d}
  \norm{M}_\op - \tau \Big) \norm{\hat A}_* + \Big( \frac{2 \alpha}{d}
  \norm{M}_\op + \tau \Big) \norm{A}_* \\
  &+ \Big( \frac{2 (1-\alpha)}{d} \norm{M}_\infty - \gamma \Big)
  \norm{\hat A}_1 + \Big( \frac{2 (1-\alpha)}{d} \norm{M}_\infty +
  \gamma \Big) \norm{A}_1 \\
  &+ \Big( \frac{2}{d T} \norm{\Xi}_\infty - \kappa \Big) \norm{\hat
    W}_1 + \Big( \frac{2}{d T} \norm{\Xi}_\infty + \kappa \Big)
  \norm{W}_1,
\end{align*}
which concludes the proof of Theorem~\ref{thm:slowrate},
using~\eqref{eq:slow-rate-smoothing-parameters}. $\hfill \square$

\subsection{Proof of Theorem~\ref{prop:fastrateRE}}

Let $A \in \R^{n \times n}$ and $W \in \R^{d \times d}$ be fixed, and
let $A = U \diag(\sigma_1, \ldots , \sigma_r) V\trans$ be the SVD of
$A$. Recalling that $\circ$ is the entry-wise product, we have $A =
\Theta_A \circ |A| + \Theta_A^\perp \circ A$, where $\Theta_A \in \{0,
\pm 1\}^{n \times n}$ is the entry-wise sign matrix of $A$ and
$\Theta_A^\perp \in \{ 0, 1 \}^{n \times n}$ is the orthogonal
sparsity pattern of $A$. 

The definition~\eqref{eq:Shat_W_hat_def} of $(\hat A, \hat W)$ is
equivalent to the fact that one can find $\hat G \in \partial \cL(\hat
A, \hat W)$ (an element of the subgradient of $\cL$ at $(\hat A, \hat
W)$) that belongs to the normal cone of $\cA \times \cW$ at $(\hat A,
\hat W)$. This means that for such a $\hat G$, and any $A \in \cA$ and
$W \in \cW$, we have
\begin{equation}
  \label{eq:argmin-caracterization}
  \inr{\hat G, (\hat A - A, \hat W - W)} \leq 0.
\end{equation}
Any subgradient of the function $g(A) = \tau \|A\|_* + \gamma\|A\|_1$
writes
\begin{equation*}
  Z = \tau Z_* + \gamma Z_1 = \tau \Big( U V^\top + \cP_A^\perp(G_*)
  \Big) + \gamma \Big( \Theta_A + G_1 \circ \Theta_A^\perp \Big)
\end{equation*}
for some $\|G_*\|_{\op} \leq 1$ and $\|G_1\|_\infty \leq 1$ (see for
instance \cite{MR1363368}). So, if $\hat Z \in \partial g(\hat A)$, we
have, by monotonicity of the sub-differential, that for any $Z
\in \partial g(A)$
\begin{equation*}
  \inr{\hat Z, \hat A - A} = \inr{\hat Z - Z, \hat A - A} + \inr{Z,
    \hat A - A} \geq \inr{Z, \hat A - A},
\end{equation*}
and, by duality, we can find $Z$ such that
\begin{equation*}
  \langle Z, \widehat{A} - A \rangle = \tau \langle U V^\top ,
  \widehat{A} - A \rangle + \tau \| \cP_A^\perp(\widehat{A}) \|_* + \gamma
  \langle \Theta_A, \widehat{A} - A \rangle + \gamma\| \Theta_A^\perp
  \circ \widehat{A}\|_1.
\end{equation*}
By using the same argument with the function $W \mapsto \norm{W}_1$
and by computing the gradient of the empirical risk $(A, W) \mapsto
R_n(A, W)$, Equation~\eqref{eq:argmin-caracterization} entails that
\begin{equation}
  \label{eq:thm2-main-decomposition}
  \begin{split}
    \frac{2}{d} \langle \Phi( \widehat{A} &- A_{T+1} , \widehat{W} -
    W_0 ), \Phi ( \widehat{A} - A , \widehat{W}-W ) \rangle \\
    &\leq \frac{2}{d} \langle (\frac{1}{\sqrt T} \bN_{T}, -N_{T+1}) ,
    \Phi( \widehat{A} - A , \widehat{W} - W ) \rangle - \tau \langle U
    V^\top, \widehat{A} - A \rangle - \tau \| \mathcal{P}_{A}^\perp(
    \widehat{A})\|_* \\
    &\quad - \gamma \langle \Theta_A, \widehat{A} - A \rangle -
    \gamma\| \Theta_A^\perp \circ \widehat{A}\|_1 - \kappa \langle
    \Theta_W, \widehat{W} - W \rangle - \kappa\| \Theta_W^\perp \circ
    \widehat{W}\|_1.
  \end{split}
\end{equation}
Using Pythagora's theorem, we have
\begin{equation}
  \label{eq:pythagoras}
  \begin{split}
    2 \langle \Phi &( \widehat{A} - A_{T+1}, \hat W - W_0 ) , \Phi (
    \widehat{A} - A , \widehat{W}-W) \rangle \\
    & = \| \Phi ( \widehat{A} - A_{T+1}, \hat W - W_0 ) \|_2^2 + \|
    \Phi ( \widehat{A} - A , \widehat{W}-W) \|_2^2 - \| \Phi ( A -
    A_{T+1}, W - W_0 ) \|_2^2.
  \end{split}
\end{equation}
It shows that if $\langle \Phi ( \widehat{A} - A_{T+1}, W-W_0 ) , \Phi
( \widehat{A} - A , \widehat{W}-W) \rangle \leq 0$, then
Theorem~\ref{prop:fastrateRE} trivially holds. Let us assume that
\begin{equation}
  \label{eq:positive-inner-product}
  \langle \Phi ( \widehat{A} - A_{T+1}, W-W_0 ) , \Phi ( \widehat{A} -
  A , \widehat{W}-W) \rangle > 0.
\end{equation}
Using H\"older's inequality, we obtain
\begin{align*}
  |\inr{U V^\top, \hat A - A}| &= |\inr{U V^\top, \cP_A(\hat A - A)}|
  \leq \norm{U V^\top}_\op \norm{\cP_A(\hat A - A)}_* =
  \norm{\cP_A(\hat A - A)}_*, \\
  |\inr{\Theta_A, \hat A - A}| &= |\inr{\Theta_A, \Theta_A \circ (\hat A
    - A)}| \leq \norm{\Theta_A}_\infty \norm{\Theta_A \circ (\hat A -
    A)}_1 = \norm{\Theta_A \circ (\hat A - A)}_1,
\end{align*}
and the same is done for $|\inr{\Theta_W, \hat W - W}| \leq
\norm{\Theta_W \circ (\hat W - W)}_1$. So,
when~\eqref{eq:positive-inner-product} holds, we obtain by rearranging
the terms of~\eqref{eq:thm2-main-decomposition}:
\begin{equation}
  \label{eq:proof-of-cone-constraint1}
  \begin{split}
    \tau \| \mathcal{P}_{A}^\perp ( \widehat{A} - &A)\|_* + \gamma\|
    \Theta_A^\perp \circ (\widehat{A} - A) \|_1 + \kappa\|
    \Theta_W^\perp \circ (\widehat{W} - W) \|_1 \\
    &\leq \tau \norm{\cP_A(\hat A - A)}_* + \gamma \norm{\Theta_A
      \circ
      (\hat A - A)}_1 + \kappa \norm{\Theta_W \circ (\hat W - W)}_1 \\
    &\quad + \frac{2}{d} \langle ( \frac{1}{\sqrt T} \bN_{T},
    -N_{T+1}) , \Phi( \widehat{A} - A , \widehat{W} - W ) \rangle.
  \end{split}
\end{equation}
Using Lemma~\ref{lem:noise-decomposition1}, together with H\"older's
inequality, we have for any $\alpha \in (0, 1)$:
\begin{equation}
  \label{eq:thm2-noise-control2}  
  \begin{split}
    \langle ( \frac{1}{\sqrt T} \bN_{T}, &-N_{T+1}) , \Phi(
    \widehat{A} - A , \widehat{W} - W ) \rangle = \inr{M, \hat A - A}
    + \frac{1}{T} \inr{\Xi, \hat W - W} \\
    &\leq \alpha \norm{M}_\op \norm{\cP_A(\hat A - A)}_* + \alpha
    \norm{M}_\op \norm{\cP_A^\perp(\hat A - A)}_* \\
    &\quad + (1 - \alpha) \norm{M}_\infty \norm{\Theta_A \circ (\hat A
      - A)}_1 + (1 - \alpha) \norm{M}_\infty \norm{\Theta_A^\perp
      \circ (\hat A - A)}_1 \\
    &\quad + \frac{1}{T} \norm{\Xi}_\infty ( \norm{\Theta_W \circ (\hat W - W)}_1 +  \norm{\Theta_W^\perp \circ (\hat W - W)}_1)~.
  \end{split}
\end{equation}

%In the same way, we have for any $\alpha \in (0, 1)$:
%\begin{equation}
 % \label{eq:thm2-noise-control3}  
 % \begin{split}
   % \langle ( \frac{1}{\sqrt T} \bN_{T}, &-N_{T+1}) , \Phi(
   % \widehat{A} - A, \widehat{W} - W ) \rangle  \\
   % &\leq \alpha \norm{M}_\op \norm{\hat A - A}_* + (1 - \alpha)
   % \norm{M}_\infty \norm{\hat A - A}_1 \\
   % &\quad + \frac 1T \norm{\Xi}_\infty \norm{\Theta_W \circ (\hat W -
    %  W)}_1 + \frac 1T \norm{\Xi}_\infty \norm{\Theta_W^\perp \circ
    %  (\hat W - W)}_1.
 % \end{split}
% \end{equation}

Now, using~\eqref{eq:proof-of-cone-constraint1}  together
with~\eqref{eq:thm2-noise-control2}, we obtain
\begin{align*}
  & \Big(\tau - \frac{2 \alpha}{d} \norm{M}_\op \Big)
  \norm{\cP_A^\perp(\hat A - A)}_* + \Big(\gamma - \frac{2 (1-\alpha)}{d}
  \norm{M}_\infty \Big) \norm{\Theta_A^\perp \circ (\hat A - A)}_1 \\
  & \quad \quad + \Big(\kappa - \frac{2}{d T} \norm{\Xi}_\infty \Big)
  \norm{ \Theta_W^\perp \circ( \hat W - W) }_1 \\
  &\leq \Big(\tau + \frac{2 \alpha}{d} \norm{M}_\op \Big)
  \norm{\cP_A(\hat A - A)}_* + \Big(\gamma + \frac{2 (1-\alpha)}{d}
  \norm{M}_\infty \Big) \norm{\Theta_A \circ (\hat A - A)}_1\\
  & \quad \quad + \Big(\kappa + \frac{2}{d T} \norm{\Xi}_\infty \Big)  \norm{ \Theta_W  \circ (\hat W - W)}_1
\end{align*}
which proves, using~\eqref{eq:thm-fast-rate-smoothing-parameters},
that
\begin{equation*}
  \tau \norm{\cP_A^\perp(\hat A - A)}_* + \gamma \norm{\Theta_A^\perp
    \circ (\hat A - A)}_1 \leq 5 \tau \norm{\cP_A(\hat A - A)}_* + 5
  \gamma \norm{\Theta_A \circ (\hat A - A)}_1.
\end{equation*}
This proves that $\hat A - A \in \cC_2(A, 5, \gamma / \tau)$. In the
same way, using~\eqref{eq:proof-of-cone-constraint1} with $A = \hat A$
together with~\eqref{eq:thm2-noise-control2}, we obtain that $\hat W -
W \in \cC_1(W, 5)$.

Now, using together~\eqref{eq:thm2-main-decomposition},
\eqref{eq:pythagoras} and~ \eqref{eq:thm2-noise-control2}
, and the fact that the
Cauchy-Schwarz inequality entails
\begin{align*}
  \norm{\cP_A(\hat A - A)}_* &\leq \sqrt{\rank{A}} \norm{\cP_A(\hat A
    - A)}_F, \quad |\inr{U V^\top, \hat A - A}| \leq \sqrt{\rank A}
  \norm{\cP_A(\hat A - A)}_F, \\
  \norm{\Theta_A \circ (\hat A - A)}_1 &\leq \sqrt{\norm{A}_0}
  \norm{\Theta_A \circ (\hat A - A)}_F, \quad |\inr{\Theta_A, \hat A -
    A}| \leq \sqrt{\norm{A}_0} \norm{\Theta_A \circ (\hat A - A)}_F~~.
\end{align*}
and similarly for $\hat W - W$, we arrive at
\begin{align*}
  \| \Phi &( \widehat{A} - A_{T+1}, \hat W - W_0 ) \|_2^2 + \| \Phi (
  \widehat{A} - A , \widehat{W}-W) \|_2^2 - \| \Phi ( A -
  A_{T+1}, W - W_0 ) \|_2^2 \\
  &\leq \Big(\frac{2 \alpha}{d} \norm{M}_\op + \tau \Big) \sqrt{\rank
    A} \norm{\cP_A(\hat A - A)}_F + \Big(\frac{2 \alpha}{d}
  \norm{M}_\op - \tau \Big) \norm{\cP_A^\perp(\hat A - A)}_* \\
  &\quad + \Big(\frac{2 \alpha}{d} \norm{M}_\infty + \gamma \Big)
  \sqrt{\norm{A}_0} \norm{\Theta_A \circ (\hat A - A)}_F + \Big(
  \frac{2 \alpha}{d} \norm{M}_\infty - \gamma\Big)
  \norm{\Theta_A^\perp \circ (\hat A - A)}_1 \\
  &\quad + \Big(\frac{2 \alpha}{d T} \norm{\Xi}_\infty + \kappa \Big)
  \sqrt{\norm{W}_0} \norm{\Theta_W \circ (\hat W - W)}_F + \Big(
  \frac{2 \alpha}{d T} \norm{\Xi}_\infty - \kappa \Big)
  \norm{\Theta_W^\perp \circ (\hat W - W)}_1,
\end{align*}
which leads, using~\eqref{eq:thm-fast-rate-smoothing-parameters}, to
\begin{align*}
  \frac 1d \| \Phi &( \widehat{A} - A_{T+1}, \hat W - W_0 ) \|_2^2 +
  \frac 1d \| \Phi ( \widehat{A} - A , \widehat{W}-W) \|_2^2 - \frac
  1d \| \Phi ( A -  A_{T+1}, W - W_0 ) \|_2^2 \\
  &\leq \frac{5\tau}{3} \sqrt{\rank A} \norm{\cP_A(\hat A - A)}_F +
  \frac{5\gamma}{3} \sqrt{\norm{A}_0} \norm{\Theta_A \circ (\hat A -
    A)}_F + \frac{5\kappa}{3} \sqrt{\norm{W}_0} \norm{\Theta_W \circ
    (\hat W - W)}_F.
\end{align*}
Since $\hat A - A \in \cC_2(A, 5, \gamma / \tau)$ and $\hat W - W \in
\cC_1(W, 5)$, we obtain using Assumption~\ref{ass:RE} and $ab \leq
(a^2 + b^2) / 2$:
\begin{align*}
  \frac 1d &\| \Phi ( \widehat{A} - A_{T+1}, \hat W - W_0 ) \|_2^2 +
  \frac 1d \| \Phi ( \widehat{A} - A , \widehat{W}-W) \|_2^2 \\
  &\leq \frac 1d \| \Phi ( A - A_{T+1}, W - W_0 ) \|_2^2 +
  \frac{25}{18} \mu_2(A, W)^2 \big( \rank(A) \tau^2 + \norm{A}_0
  \gamma^2) \\
  &\quad + \frac{25}{36} \mu_1(W)^2 \norm{W}_0 \kappa^2 + \frac 1d \|
  \Phi ( \widehat{A} - A , \widehat{W}-W) \|_2^2,
\end{align*}
which concludes the proof of Theorem~\ref{prop:fastrateRE}. $\hfill \square$

\subsection{Proof of Theorem~\ref{thm:take-away-message}}

For the proof of~\eqref{eq:explicit-upper-bound-W-prediction}, we
simply use the fact that $\frac{1}{d T} \norm{\bX_{T-1}(\hat W -
  W_0)}_F^2 \leq \EE(\hat A, \hat W)^2$ and use
Theorem~\ref{thm:convergence-rates}. Then we take $W = W_0$ in the
infimum over $A, W$.

For~\eqref{eq:explicit-upper-bound-W-l1}, we use the fact that since
$\hat W - W_0 \in \cC_1(W_0, 5)$, we have (see the Proof of
Theorem~\ref{prop:fastrateRE}), 
\begin{align*}
  \norm{\hat W - W_0}_1 &\leq 6 \sqrt{\norm{W_0}_0} \norm{\Theta_W
    \circ (\hat W - W_0)}_F \\
  &\leq 6 \sqrt{\norm{W_0}_0} \norm{\bX_{T-1} (\hat W - W_0)}_F /
  \sqrt{d T} \\
  &\leq 6 \sqrt{\norm{W_0}_0} \EE(\hat A, \hat W),
\end{align*}
and then use again Theorem~\ref{thm:convergence-rates}. The proof of
(\ref{eq:explicit-upper-bound-A-trace}) follows exactly the same
scheme. $\hfill \square$

\subsection{Concentration inequalities for the noise processes}
\label{sec:noise-controls}

The control of the noise terms $M$ and $\Xi$ is based on recent
developments on concentration inequalities for random matrices, see
for instance \cite{2010arXiv1004.4389T}. Moreover, the assumption on
the dynamics of the features's noise vector $(N_t)_{t \geq 0}$ is
quite general, since we only assumed that this process is a martingale
increment.  Therefore, our control of the noise $\Xi$ rely in
particular on martingale theory.

\begin{proposition}
  \label{prop:noise-controls}
  Under Assumption~\ref{ass:noise}, the following inequalities hold
  for any $x > 0$. We have
  \begin{equation}
    \label{eq:concentration_M_op}
    \bignorm{\frac 1d \sum_{j=1}^d (N_{T+1})_j \Omega_j}_\op \leq
    \sigma v_{\Omega, \op} \sqrt{\frac{2(x +
        \log(2n))}{d}}
  \end{equation}
  with a probability larger than $1 - e^{-x}$. We have
  \begin{equation}    
    \label{eq:concentration_M_infty}
    \bignorm{\frac 1d \sum_{j=1}^d (N_{T+1})_j \Omega_j}_\infty \leq
    \sigma v_{\Omega, \infty} \sqrt{\frac{2(x + 2 \log n)}{d}}    
  \end{equation}
  with a probability larger than $1 - 2 e^{-x}$, and finally
  \begin{equation}
    \label{eq:concentration_Xi_infty}
    \bignorm{\frac{1}{T+1} \sum_{t=1}^{T+1} \omega(A_{t-1})
      N_t^\top}_\infty \leq \sigma \sigma_\omega
    \sqrt{\frac{2 e(x + 2 \log d + \ell_T)}{T+1}}
  \end{equation}
  with a probability larger than $1 - 14 e^{-x}$, where
  \begin{equation*}
    \ell_T = 2 \max_{j = 1, \ldots, d} \log \log \bigg(
    \frac{\sum_{t=1}^{T+1} \omega_j(A_{t-1})^2}{T+1} \vee
    \frac{T+1}{\sum_{t=1}^{T+1} \omega_j(A_{t-1})^2} \vee e \bigg).
  \end{equation*}  
\end{proposition}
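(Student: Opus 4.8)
The three bounds split into two regimes, and I would treat them separately. The matrix $M = -\sum_j (N_{T+1})_j\Omega_j$ involves only the single time slice $N_{T+1}$, whose coordinates are \emph{independent} by Assumption~\ref{ass:noise}, so \eqref{eq:concentration_M_op} and \eqref{eq:concentration_M_infty} follow from classical concentration for sums of independent (matrix- and scalar-valued) sub-gaussian terms. By contrast $\Xi = \sum_t \omega(A_{t-1}) N_t^\top$ is a genuine martingale transform, and \eqref{eq:concentration_Xi_infty} requires a self-normalized martingale argument.

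For \eqref{eq:concentration_M_op} I would symmetrize via the Hermitian dilation $\mathcal S(\Omega_j)$, the $2n \times 2n$ symmetric matrix with off-diagonal blocks $\Omega_j$ and $\Omega_j^\top$. Since $\mathcal S$ is linear and $\norm{\mathcal S(B)}_{\op} = \norm{B}_{\op}$ for every $B$, one has $\norm{\sum_j (N_{T+1})_j \Omega_j}_{\op} = \norm{\sum_j (N_{T+1})_j \mathcal S(\Omega_j)}_{\op}$, reducing the problem to a sum of independent symmetric sub-gaussian matrices in dimension $2n$. The matrix variance proxy is $\norm{\sum_j \mathcal S(\Omega_j)^2}_{\op}$, and since $\mathcal S(\Omega_j)^2 = \diag(\Omega_j \Omega_j^\top,\ \Omega_j^\top \Omega_j)$, this equals $\max(\norm{\sum_j \Omega_j \Omega_j^\top}_{\op},\ \norm{\sum_j \Omega_j^\top \Omega_j}_{\op}) = d\, v_{\Omega,\op}^2$. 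Feeding $\sigma$ and this proxy into the matrix Chernoff bound of \cite{2010arXiv1004.4389T} in dimension $2n$, then dividing by $d$, yields \eqref{eq:concentration_M_op} with the $\log(2n)$ factor and confidence $1 - e^{-x}$. For \eqref{eq:concentration_M_infty} I would instead argue coordinatewise: each entry $\sum_j (N_{T+1})_j (\Omega_j)_{kl}$ is a linear combination of independent $\sigma$-sub-gaussian variables, hence sub-gaussian with proxy $\sigma^2 \sum_j (\Omega_j)_{kl}^2 \le \sigma^2 d\, v_{\Omega,\infty}^2$. A two-sided sub-gaussian tail bound followed by a union bound over the $n^2$ entries gives \eqref{eq:concentration_M_infty}, the $n^2$ producing the $2\log n$ term and the two-sidedness the factor $2$, with confidence $1 - 2e^{-x}$.

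The main obstacle is \eqref{eq:concentration_Xi_infty}. Fixing $(k,l)$ and setting $Z_t = \omega_k(A_{t-1}) (N_t)_l$, the $\cF_{t-1}$-measurability of $\omega_k(A_{t-1})$ together with $\E[(N_t)_l \mid \cF_{t-1}] = 0$ makes $(Z_t)$ martingale increments with conditional sub-gaussian proxy $\sigma^2 \omega_k(A_{t-1})^2$, so $\exp(\lambda \sum_{s \le t} Z_s - \tfrac{\lambda^2}{2} \sigma^2 \sum_{s \le t} \omega_k(A_{s-1})^2)$ is a supermartingale whose expectation is at most one. For a \emph{deterministic} threshold $v$ this yields, after the usual Cramér--Chernoff optimization, $\P(|\Xi_{kl}| \ge a,\ \sigma^2 \sum_t \omega_k(A_{t-1})^2 \le v) \le 2\exp(-a^2/(2v))$.

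The difficulty is that the predictable variation $\sigma^2 \sum_t \omega_k(A_{t-1})^2$ is random, so no single $v$ is available a priori. I would resolve this by a peeling (stratification) argument: partition the range of the variation into geometric slices of ratio $e$, apply the fixed-$v$ bound on each slice at a confidence inflated by a $\log\log$ amount, and sum the resulting convergent series. This is precisely what produces the constant $e$ inside the root and the extra term $\ell_T$, a $\log\log$ of how far $\sum_t \omega_k(A_{t-1})^2$ sits from its scale $T+1$, which is the signature of the self-normalized regime. A final union bound over the $d^2$ entries contributes the $2\log d$ term, and bounding $\sum_t \omega_k(A_{t-1})^2 \le (T+1)\sigma_\omega^2$ delivers \eqref{eq:concentration_Xi_infty}; the peeling and union-bound constants accumulate into the probability $1 - 14 e^{-x}$. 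I expect the careful bookkeeping of the peeling grid, namely choosing the per-slice inflation so that the total error telescopes to a constant multiple of $e^{-x}$ while reproducing exactly the stated $e$, $\ell_T$, and the constant $14$, to be the delicate part of the argument.
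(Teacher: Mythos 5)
Your proposal follows essentially the same route as the paper's own proof: the Hermitian dilation combined with Tropp's matrix concentration bound (his Corollary~3.7 for sub-gaussian series, though you call it a matrix Chernoff bound) with variance proxy $\|\sum_j \Omega_j^\top\Omega_j\|_{\op} \vee \|\sum_j \Omega_j\Omega_j^\top\|_{\op}$ for \eqref{eq:concentration_M_op}, coordinatewise scalar sub-gaussian tails plus a union bound over the $n^2$ entries for \eqref{eq:concentration_M_infty}, and a conditional exponential-supermartingale inequality followed by geometric peeling at ratio $e$ (producing the $\log\log$ term $\ell_T$ and, after the two-sided bound and the union over $d^2$ entries, the constant $14$) for \eqref{eq:concentration_Xi_infty}. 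The paper carries out the peeling bookkeeping you defer, obtaining $e^{-x}(1+\sum_{k\geq 1}k^{-2}) \leq 3.47\,e^{-x}$ on each of the two regimes $\{V_T > T+1\}$ and $\{V_T \leq T+1\}$, hence $7e^{-x}$ one-sided per coordinate pair and $14e^{-x}$ overall, exactly as you anticipate.
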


% \subsection{Proof of Proposition~\ref{prop:noise-controls} (Noise
%   control)}
% \label{sec:proof-noise-control}

\begin{proof}
  For the proofs of Inequalities~\eqref{eq:concentration_M_op}
  and~\eqref{eq:concentration_M_infty}, we use the fact that
  $(N_{T+1})_1, \ldots, (N_{T+1})_d$ are independent (scalar)
  subgaussian random variables.

  From Assumption~\ref{ass:noise}, we have for any $n \times n$
  deterministic self-adjoint matrices $X_j$ that $\E[ \exp( \lambda
  (N_{T+1})_j X_j) ] \mleq \exp( \sigma^2 \lambda^2 X_j^2 / 2 )$,
  where $\mleq$ stands for the semidefinite order on self-adjoint
  matrices. Using Corollary~3.7 from~\cite{2010arXiv1004.4389T}, this
  leads for any $x > 0$ to
  \begin{equation}
    \label{eq:tropp-result}
    \P\Big[ \lmax \Big( \sum_{j=1}^d (N_{T+1})_j X_j \Big) \geq x
    \Big] \leq n \exp\Big( -\frac{x^2}{2 v^2} \Big), \quad \text{where
    } v^2 = \sigma^2 \bignorm{\sum_{j=1}^d X_j^2}_{\op}.
  \end{equation}
  Then, following~\cite{2010arXiv1004.4389T}, we consider the dilation
  operator $\cL : \R^{n \times n} \rightarrow \R^{2n \times 2n}$ given
  by
  \begin{equation*}
    \cL(\Omega) =
    \begin{pmatrix}
      0 &\Omega \\
      \Omega^* &0
    \end{pmatrix}.
  \end{equation*}
  We have
  \begin{equation*}
    \Big\| \sum_{j=1}^d (N_{T+1})_j \Omega_j \Big\|_\op = \lmax\Big( \cL\Big(
    \sum_{j=1}^d (N_{T+1})_j \Omega_j \Big) \Big) = \lmax\Big(
    \sum_{j=1}^d (N_{T+1})_j \cL(\Omega_j) \Big)
  \end{equation*}
  and an easy computation gives
  \begin{equation*}
    \Big\| \sum_{j=1}^d \cL(\Omega_j)^2 \Big\|_\op = \Big\| \sum_{j=1}^d
    \Omega_j^\top \Omega_j \Big\|_\op \vee \Big\| \sum_{j=1}^d \Omega_j
    \Omega_j^\top \Big\|_\op.
  \end{equation*}
  So, using~\eqref{eq:tropp-result} with the self-adjoint $X_j =
  \cL(\Omega_j)$ gives
  \begin{equation*}
    \P\Big[ \bignorm{\sum_{j=1}^d (N_{T+1})_j \Omega_j}_\op \geq x
    \Big] \leq 2 n \exp\Big( -\frac{x^2}{2 v^2} \Big) \;\; \text{where
    } v^2 = \sigma^2 \bignorm{\sum_{j=1}^d \Omega_j^\top
      \Omega_j}_{\op} \vee \bignorm{\sum_{j=1}^d
      \Omega_j \Omega_j^\top}_{\op},
  \end{equation*}
  which leads easily to~\eqref{eq:concentration_M_op}.

  Inequality~\eqref{eq:concentration_M_infty} comes from the following
  standard bound on the sum of independent sub-gaussian random
  variables:
  \begin{equation*}
    \P \Big[ \Big| \frac 1d \sum_{j=1}^d (N_{T+1})_j (\Omega_j)_{k, l}
    \Big| \geq x \Big] \leq 2 \exp\Big(- \frac{x^2}{2 \sigma^2 (\Omega_j)_{k,
        l}^2} \Big)
  \end{equation*}
  together with an union bound on $1 \leq k, l \leq n$.

  Inequality~\eqref{eq:concentration_Xi_infty} is based on a classical
  martingale exponential argument together with a peeling argument. We
  denote by $\omega_j(A_{t})$ the coordinates of $\omega(A_{t}) \in
  \R^d$ and by $N_{t, k}$ those of $N_t$, so that
  \begin{equation*}
    \Big( \sum_{t=1}^{T+1} \omega(A_{t-1}) N_{t}^\top \Big)_{j, k} =
    \sum_{t=1}^{T+1} \omega_j(A_{t-1}) N_{t, k}.
  \end{equation*}
  We fix $j, k$ and denote for short $\eps_t = N_{t, k}$ and $x_t =
  \omega_j(A_t)$. Since $\E[\exp(\lambda \eps_t) | \cF_{t-1}] \leq
  e^{\sigma^2 \lambda^2 / 2}$ for any $\lambda \in \R$, we obtain by a
  recursive conditioning with respect to $\cF_{T-1}$, $\cF_{T-2},
  \ldots, \cF_0$, that
  \begin{equation*}
    \E \Big[ \exp \Big( \theta \sum_{t=1}^{T+1} \eps_t x_{t-1} -
    \frac{\sigma^2 \theta^2}{2} \sum_{t=1}^{T+1} x_{t-1}^2 \Big) \Big]
    \leq 1.
  \end{equation*}
  Hence, using Markov's inequality, we obtain for any $v > 0$:
  \begin{equation*}
    \P\Big[ \sum_{t=1}^{T+1} \eps_t x_{t-1} \geq x, \sum_{t=1}^{T+1}
    x_{t-1}^2 \leq v \Big] \leq \inf_{\theta > 0} \exp( -\theta x +
    \sigma^2 \theta^2 v / 2) = \exp\Big(-\frac{x^2}{2 \sigma^2 v} \Big),
  \end{equation*}
  that we rewrite in the following way:
  \begin{equation*}
    \P\Big[ \sum_{t=1}^{T+1} \eps_t x_{t-1} \geq \sigma \sqrt{2 v x},
    \sum_{t=1}^{T+1} x_{t-1}^2 \leq v \Big] \leq e^{-x}.
  \end{equation*}
  Let us denote for short $V_T = \sum_{t=1}^{T+1} x_{t-1}^2$ and $S_T
  = \sum_{t=1}^{T+1} \eps_t x_{t-1}$. We want to replace $v$ by $V_T$
  from the previous deviation inequality, and to remove the event $\{
  V_T \leq v \}$. To do so, we use a peeling argument. We take $v =
  T+1$ and introduce $v_k = v e^k$ so that the event $\{ V_T > v \}$
  is decomposed into the union of the disjoint sets $\{ v_{k} < V_T
  \leq v_{k+1} \}$. We introduce also $\ell_{T} = 2 \log \log\Big(
  \frac{\sum_{t=1}^{T+1} x_{t-1}^2}{T+1} \vee
  \frac{T+1}{\sum_{t=1}^{T+1} x_{t-1}^2} \vee e \Big)$.

  This leads to
  \begin{align*}
    \P\Big[ S_T \geq \sigma & \sqrt{2 e V_T (x + \ell_T)}, V_T > v
    \big] = \sum_{k \geq 0} \P\big[ S_T \geq \sigma \sqrt{2 e V_T (x +
      \ell_T)}, v_k < V_T \leq v_{k+1} \Big]�\\
    &= \sum_{k \geq 0} \P\Big[ S_T \geq \sigma \sqrt{2 v_{k+1} (x + 2
      \log \log(e^k \vee e))}, v_k < V_T \leq v_{k+1} \Big] \\
    &\leq e^{-x} (1 + \sum_{k \geq 1} k^{-2}) \leq 3.47 e^{-x}.
  \end{align*}
  On $\{ V_T \leq v \}$ the proof is the same: we decompose onto the
  disjoint sets $\{ v_{k+1} < V_T \leq v_{k} \}$ where this time $v_k
  = v e^{-k}$, and we arrive at
  \begin{equation*}
    \P\Big[ S_T \geq \sigma \sqrt{2 e V_T (x + \ell_T)}, V_T \leq v
    \big] \leq 3.47 e^{-x}.
  \end{equation*}
  This leads to
  \begin{equation*}
    \P \bigg[ \sum_{t=1}^{T+1} \omega_j(A_{t-1}) N_{t, k} \geq \sigma 
    \Big( 2 e \sum_{t=1}^{T+1} \omega_j(A_{t-1})^2 (x + \ell_{T, j})
    \Big)^{1/2} \bigg]
    \leq 7 e^{-x}
  \end{equation*}
  for any $1 \leq j, k \leq d$, where we introduced
  \begin{equation*}
    \ell_{T, j} = 2 \log \log\Big( \frac{\sum_{t=1}^{T+1}
      \omega_j(A_{t-1})^2}{T+1} \vee \frac{T+1}{\sum_{t=1}^{T+1}
      \omega_j(A_{t-1})^2} \vee e \Big).
  \end{equation*}
  The conclusion follows from an union bound on $1 \leq j, k \leq
  d$. This concludes the proof of
  Proposition~\ref{prop:noise-controls}.
\end{proof}

\bibliographystyle{plain}
\bibliography{graphlink6}

\end{document}